\numberwithin{equation}{section}
\newcommand\norm[1]{\left\lVert#1\right\rVert}
\newcommand\prs[1]{\left(#1\right)}
\newcommand\sbk[1]{\left[#1\right]}
\newcommand{\E}{\mathbb{E}}
\newcommand{\Scal}{\mathcal{S}}
\newcommand{\R}{\mathbb{R}}
\newcommand{\gti}{\rightarrow\infty}
\newcommand{\gtz}{\rightarrow0}
\newcommand{\var}{\mathrm{Var}}
\theoremstyle{thmstyleone}%
\newtheorem{theorem}{Theorem}
\newtheorem{proposition}[theorem]{Proposition}%
\newtheorem{corollary}[theorem]{Corollary}
\theoremstyle{thmstyletwo}%
\theoremstyle{thmstylethree}%
\newtheorem{assumption}{Assumption}
\newenvironment{acknowledgments}{%
  \section*{Acknowledgments}%
}{}
\begin{document}

\title[]{Deep Learning for Markov Chains: Lyapunov Functions, Poisson's Equation, and Stationary Distributions}


\author*[1]{\fnm{Yanlin} \sur{Qu}}\email{qu.yanlin@columbia.edu}

\author[2]{\fnm{Jose} \sur{Blanchet}}\email{jose.blanchet@stanford.edu}

\author[2]{\fnm{Peter} \sur{Glynn}}\email{glynn@stanford.edu}

\affil*[1]{\orgdiv{Columbia Business School}}

\affil[2]{\orgdiv{Stanford University}}


\abstract{
Lyapunov functions are fundamental to establishing the stability of Markovian models, yet their construction typically demands substantial creativity and analytical effort. In this paper, we show that deep learning can automate this process by training neural networks to satisfy integral equations derived from first-transition analysis. Beyond stability analysis, our approach can be adapted to solve Poisson’s equation and estimate stationary distributions. While neural networks are inherently function approximators on compact domains, it turns out that our approach remains effective when applied to Markov chains on non‑compact state spaces. We demonstrate the effectiveness of this methodology through several examples from queueing theory and beyond.}

\keywords{Markov chains, deep learning, Lyapunov functions, Poisson's equation, stationary distributions, first-transition analysis}


\pacs[MSC Classification]{60J05}

\maketitle

\section{Introduction}\label{sec1}

The use of Lyapunov functions to settle stability questions in the setting of Markovian queues has become a standard tool within the research community. However, the construction of appropriate Lyapunov functions to determine the stability of many models remains challenging, and settling stability questions via the use of this method requires great creativity and insight in order to build suitable Lyapunov functions; see, for example, Chapter 16.4 of \cite{meyn2012markov} and Chapter 3.2 of \cite{dai2020processing}.

In this paper, we show how deep learning can be used to numerically compute Lyapunov functions. In particular, given a Markov chain $X=(X_n:n\geq0)$ living on state space $\Scal\subseteq \R^d$ with one-step transition kernel $P=(P(x,dy):x,y\in\Scal)$, the typical Lyapunov function used to settle stability involves a non-negative function $v=(v(x):x\in\Scal)$ satisfying the inequality
\begin{equation}
\label{lyapunov_eqn}
(Pv)(x)\leq v(x)-1,
\end{equation}
for $x\in A^c$ and some suitable set $A$, where $(Ph)(x)=\int_\Scal h(y)P(x,dy)$ for a generic function $h$; see Chapter 11.3 of \cite{meyn2012markov}. The function $v$ is then known to provide an upper bound on $\E_x T_A$, where $\E_x$ is the expectation on the path-space of $X$ conditioned on $X_0=x$, and $T_A=\inf\{n\geq0:X_n\in A\}$. In fact, our deep learning algorithm computes the minimal non-negative function $v$ satisfying \eqref{lyapunov_eqn}, namely the function $v^*(x)=\E_x T_A$ that satisfies \eqref{lyapunov_eqn} with equality.

More generally, we leverage deep learning to globally compute expectation functions of the form 
\begin{equation} 
\label{ustar_eqn}
u^*(x)=\E_x\sbk{\sum_{k=0}^T\exp\prs{-\sum_{j=0}^{k-1}\beta(X_j)}r(X_k)},
\end{equation}
for $r\geq0$ and $x\in C$, where $T=\inf\{n\geq0:X_n\in C^c\}$. Note that expected hitting times, exit probabilities, and infinite horizon expected discounted rewards (obtained by making $C^c$ the empty set) are all special cases of \eqref{ustar_eqn}. 
By conditioning on the first transition of the Markov chain (i.e. using {\it first-transition analysis} (FTA)), these functions can be characterized by integral equations.

We train neural networks to satisfy these equations by minimizing the integrated squared residual error with respect to the network parameters using stochastic gradient descent (SGD). To obtain an unbiased gradient estimator, we use the double-sampling trick that utilizes two independent and identically distributed (i.i.d.) copies of the first transition; see \cite{goda2023constructing}. We have used this idea in related work of ours on computing rates of convergence to equilibrium for Markov chains; see \cite{qu2024deep}. This idea has also been used to compute splitting probabilities and to predict rare events without simulating long sample paths, thereby achieving great efficiency; see \cite{strahan2023predicting,cheng2024surprising} and the references therein. In the context of reinforcement learning, the same idea can be used to perform policy evaluation, leading to the {\it residual gradient algorithm}; see \cite{baird1995residual,baird1998gradient,sutton2018reinforcement} (Chapter 11.5).

Our main contribution in this paper is to expose the use of deep learning to the queueing and applied probability communities, as a means of solving the linear integral equations that arise from Markov chain models. In the course of explaining these ideas, we describe how deep learning can be applied to compute global approximations to the solution of FTA equations arising within the general setting of \eqref{ustar_eqn}. As far as we are aware, this generalization is new. In particular, as explained above, our proposed approach therefore provides a vehicle for deep-learning-based numerical construction of Lyapunov functions. Hence,  deep learning can potentially play a role in settling the types of stability questions that often arise in Markov chain modeling applications.  

In addition to the integral equations that arise within the setting of FTA expectations and probabilities, this paper also contributes suitable deep-learning-based algorithms for solving the integral equations that arise within the context of stationary distribution computation and also that of Poisson's equation. To be specific, when $X$ is positive recurrent with a unique stationary distribution $\pi$, {\it Poisson's equation} involves computing the solution $u^*$ to the linear integral equation given by 
\begin{equation}
\label{poisson_eqn}
u(x)-(Pu)(x)=r(x)-\pi r,
\end{equation}
for $x\in\Scal$, where $\pi r=\int_\Scal r(y)\pi(dy)$ is the centering constant. Since this constant is typically unknown, the approach used to solve \eqref{ustar_eqn} must be adapted to address this issue; this contribution is also new. We further note that the unknown probability measure $\pi$ arising in the setting of stationary distribution computation is a measure rather than a function (as in \eqref{ustar_eqn}) and hence the deep learning algorithm for computing $\pi$ also requires a slight modification to the FTA deep learning approach we propose. Poisson's equation plays a key role in the central limit theorem for Markov chains, and in other calculations related to additive functionals of Markov chains; see, for example, \cite{maigret1978theoreme} and \cite{kurtz1981central}. 

The theory and practice of deep-learning-based function approximation requires that the domain of the functions being approximated be compact. We show how the deep learning method can be applied to Markov chains on non-compact state spaces, despite the fact that neural networks are  inherently function approximators on compact domains. An additional contribution of this paper is our careful analysis of the sample complexity for the deep learning approximations to the solutions to the equations arising in the computation of FTA expectations and probabilities. In particular, Theorem~\ref{thm:minimax:main} shows that, if the target function $u^*$ on $K\subseteq\R^d$ is $s$-H\"older continuous, then the mean squared error of the empirical risk minimizer (ERM) over clipped ReLU networks scales as $n^{-\frac{2s}{2s+d}}$ up to logarithmic factors, with an explicit conditioning factor reflecting the implicit fixed-point contraction associated with the linear integral equation arising in FTA settings. This connects the smoothness $s$ and dimension $d$ of the problem to the number of samples required to achieve a prescribed accuracy.

This paper is organized as follows: In Section \ref{first_section}, we apply deep learning to compute FTA-type quantities \eqref{ustar_eqn}, including Lyapunov functions. In Section \ref{subsec:minimax}, we develop the corresponding finite-sample guarantee. In Section \ref{poisson_section}, we apply deep learning to solve Poisson's equation \eqref{poisson_eqn}, without computing the centering constant. 
In Section \ref{stationary_section}, we apply deep learning to estimate stationary distributions.
In Section \ref{noncompact_section}, we extend our approach to non-compact state spaces. In Section \ref{numerical_section}, we demonstrate the effectiveness of our approach through several examples from queueing theory and beyond.

\section{First-transition analysis via deep learning}
\label{first_section}
We start by observing that the expectation $u^*(x)$ defined by \eqref{ustar_eqn} can be expressed as 
\begin{equation}
\label{inf_sum_eqn}
u^*(x)=\sum_{n=0}^\infty (H^ng)(x)
\end{equation}
for $x\in C$, where $H=(H(x,dy):x,y\in C)$ is the non-negative kernel defined via
\[
H(x,dy)=e^{-\beta(x)}P(x,dy)
\]
for $x,y\in C$ and 
\[
g(x)=r(x)+\int_{C^c}e^{-\beta(x)}P(x,dy)r(y)
\]
for $x\in C$. Here, $H^n=(H^n(x,dy):x,y\in C)$ is given by $H^0(x,dy)=\delta_x(dy)$ for $x,y\in C$ (where $\delta_x(\cdot)$ is a unit point mass at $x$), and
\[
H^n(x,dy)=\int_C H^{n-1}(x,dz)H(z,dy)
\]
for $n\geq1$.
The following assumption guarantees the finiteness of $u^*$; see Proposition \ref{finiteness_proposition} below.
\begin{assumption}
\label{finiteness_assumption}
There exists $v_1:C\rightarrow(0,\infty)$ such that $(Hv_1)(x)\leq v_1(x)-g(x)$ for $x\in C$.
\end{assumption}
With the Lyapunov-type Assumption \ref{finiteness_assumption} in force, the following bound on $u^*$ holds; see Theorem 14.2.2 in \cite{meyn2012markov}.
\begin{proposition}
\label{finiteness_proposition}
In the presence of Assumption \ref{finiteness_assumption}, $u^*(x)\leq v_1(x)$ for $x\in C$.
\end{proposition}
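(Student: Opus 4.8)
The plan is to iterate the Lyapunov-type inequality in Assumption~\ref{finiteness_assumption} and telescope; this is the standard comparison argument underlying Theorem 14.2.2 of \cite{meyn2012markov}. First I would record two elementary consequences of the assumption. Since $H$ is a non-negative kernel and $v_1>0$, we have $(Hv_1)(x)\geq0$, so $(Hv_1)(x)\leq v_1(x)-g(x)$ forces $0\leq g(x)\leq v_1(x)$ for every $x\in C$ (consistent with $r\geq0$). In particular $v_1-g$ is a non-negative function on $C$, which lets $H$ act additively on the decomposition $v_1=(v_1-g)+g$ wherever the relevant integrals are finite.

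Next I would prove by induction on $k$ that $(H^k v_1)(x)<\infty$ for all $k\geq0$ and $x\in C$, together with the bound
\[
(H^k g)(x)\leq (H^k v_1)(x)-(H^{k+1}v_1)(x).
\]
The base case $k=0$ is Assumption~\ref{finiteness_assumption}, which also gives $(Hv_1)(x)\leq v_1(x)<\infty$. For the inductive step, apply the non-negative (hence monotone) kernel $H$ to $(Hv_1)(\cdot)\leq v_1(\cdot)-g(\cdot)$: this yields $(H^{k+1}v_1)(x)\leq (H^k(v_1-g))(x)=(H^k v_1)(x)-(H^k g)(x)\leq (H^k v_1)(x)<\infty$, where the equality uses both the induction hypothesis $(H^k v_1)(x)<\infty$ and $g\geq0$. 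Rearranging gives the displayed bound at level $k$.

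Finally I would sum the displayed inequality over $k=0,1,\dots,N$; the right-hand side telescopes to
\[
\sum_{k=0}^N (H^k g)(x)\leq v_1(x)-(H^{N+1}v_1)(x)\leq v_1(x),
\]
the last inequality because $(H^{N+1}v_1)(x)\geq0$. Since $g\geq0$ the partial sums are non-decreasing in $N$ and bounded by $v_1(x)$, so they converge, and letting $N\to\infty$ in \eqref{inf_sum_eqn} gives $u^*(x)\leq v_1(x)$. The one delicate point — the step I would be most careful about — is the bookkeeping with possibly infinite values: one must carry the finiteness of $(H^k v_1)(x)$ through the induction before invoking additivity of $H$ on $v_1-g$, since otherwise an $\infty-\infty$ could arise; the inequality $g\leq v_1$ extracted from the assumption is exactly what makes this clean.
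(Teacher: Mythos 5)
Your proof is correct. The paper itself does not spell out an argument — it simply cites Theorem 14.2.2 of \cite{meyn2012markov} (the comparison theorem) — and your telescoping induction, including the careful handling of finiteness via $0\leq g\leq v_1$ to avoid $\infty-\infty$, is precisely the standard proof of that cited result applied to the representation \eqref{inf_sum_eqn}.
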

It follows from \eqref{inf_sum_eqn} and Proposition \ref{finiteness_proposition} that $u^*=(u^*(x):x\in C)$ is then a finite-valued solution of 
\begin{equation}
\label{FTA_eqn}
    u=g+Hu.
\end{equation}

Neural networks are flexible function approximators that can be trained to satisfy functional equations such as \eqref{FTA_eqn}. The key idea is to represent $u$ by a parametrized family of functions $\{u_\theta:\theta\in\Theta\}$. For example, a single-layer neural network takes the form
\begin{equation}
\label{nn_eqn}
u_\theta(x)=\sum_{k=1}^ma_j\sigma(w_j^\top x+b_j),\;\;x\in\R^d
\end{equation}
where $m$ is the number of hidden units, $\sigma:\R\rightarrow\R$ is a nonlinear activation function (such as the sigmoid function $\sigma(z)=1/(1+e^{-z})$), and the parameters $\theta=\{a_j,w_j,b_j\}_{j=1}^m$ include output weights $a_j$, input weights $w_j$, and biases $b_j$. 
If $u^*$ is known to be continuous and $C$ is compact, the universal approximation theorem \citep{cybenko1989approximation} asserts that the single-layer neural network above can approximate $u^*$ arbitrarily well, provided the activation function $\sigma$ is non-polynomial and the number of hidden units $m$ is sufficiently large.

Given a neural network $\{u_\theta:\theta\in\Theta\}$, our goal is therefore to choose $\theta$ such that $u_\theta$ approximately satisfies \eqref{FTA_eqn} on $C$, i.e., we need to minimize the extent to which $u_\theta$ fails to satisfy \eqref{FTA_eqn} on $C$. To be specific, we choose $\theta$ to minimize the integrated squared residual error
\[
l(\theta)=\int_C\prs{u_\theta(x)-g(x)-(Hu_\theta)(x)}^2\nu(dx),
\]
where $\nu$ is a probability measure on $C$ (e.g., the uniform distribution).
Let $X_0\sim\nu$ and $X_1\sim P(X_0,\cdot)$, where we write $Y\sim\eta$ to denote $P(Y\in\cdot)=\eta(\cdot)$. Then $l(\theta)$ can be written as an expectation
\[
\E\sbk{\prs{\E\sbk{u_\theta(X_0)-\Gamma(X_0,X_1)-e^{-\beta(X_0)}u_\theta(X_1)I(X_1\in C))\Big|X_0}}^2},
\]
where $\Gamma(X_0,X_1)=r(X_0)+e^{-\beta(X_0)}r(X_1)I(X_1\in C^c)$.
Stochastic gradient descent (SGD) $\theta_{t+1}=\theta_t-\alpha_t\hat{l}'(\theta_t)$ \citep{robbins1951stochastic} can be used to minimize $l(\theta)$, where $\alpha_t$ is the step size and $\hat{l}'(\theta_t)$ is an unbiased estimator of the gradient of $l(\theta)$ at $\theta_t$, i.e., $\E\hat{l}'(\theta_t)=l'(\theta_t)$. Since $l(\theta)$ is the expectation of a squared conditional expectation, differentiating it directly does not yield an unbiased gradient estimator (since the square function is nonlinear). This motivates rewriting $l(\theta)$ as a ``simple'' expectation. Given $X_0$, let $X_1$ and $X_{-1}$ be independent samples from $P(X_0,\cdot)$. Note that $l(\theta)$ becomes
\begin{equation}
\label{trick_eqn}
\begin{aligned}
&\E\sbk{u_\theta(X_0)-\Gamma(X_0,X_1)-e^{-\beta(X_0)}u_\theta(X_1)I(X_1\in C)}\\
&\cdot\sbk{u_\theta(X_0)-\Gamma(X_0,X_{-1})-e^{-\beta(X_0)}u_\theta(X_{-1})I(X_{-1}\in C)}.
\end{aligned}
\end{equation}
Differentiating the expression above yields a simple unbiased gradient estimator
\begin{equation}
\label{gradient_eqn}
\begin{aligned}
\hat{l}'(\theta)=&\sbk{u_\theta(X_0)-\Gamma(X_0,X_1)-e^{-\beta(X_0)}u_\theta(X_1)I(X_1\in C)}\\
&\cdot\sbk{u'_\theta(X_0)-e^{-\beta(X_0)}u'_\theta(X_{-1})I(X_{-1}\in C)}\\
&+\sbk{u_\theta(X_0)-\Gamma(X_0,X_{-1})-e^{-\beta(X_0)}u_\theta(X_{-1})I(X_{-1}\in C)}\\
&\cdot\sbk{u'_\theta(X_0)-e^{-\beta(X_0)}u'_\theta(X_1)I(X_1\in C)}
\end{aligned}
\end{equation}
where $u'_\theta$ is the gradient of the neural network with respect to its parameters, computed via backpropagation \citep{rumelhart1986learning}. Note that the two terms in $\hat{l}'(\theta)$ have the same expectation, so a computationally simpler unbiased gradient estimator is two times the first term. This simpler estimator avoids having to evaluate the gradient twice, a computationally expensive operation. We therefore adopt this simpler estimator in our algorithms. With $\hat{l}'(\theta)$ at hand, we can solve \eqref{FTA_eqn} by minimizing $l(\theta)$ via SGD $\theta_{t+1}=\theta_t-\alpha_t\hat{l}'(\theta_t)$. 
We refer to the resulting algorithm as FTA-RGA (Algorithm \ref{rga_alg}), where RGA stands for {\it residual gradient algorithm} (after \cite{baird1995residual}, who introduced this in the context of infinite horizon discounted reward policy evaluation). We refer to the algorithmic use of \eqref{trick_eqn} and \eqref{gradient_eqn} to obtain an unbiased gradient estimator $\hat{l}'(\theta)$ as the {\it double-sampling} trick.
\begin{algorithm}
\caption{FTA-RGA}\label{rga_alg}
\begin{algorithmic}
\Require probability measure $\nu$, neural network $\{u_\theta:\theta\in\Theta\}$, initialization $\theta_0$, step size $\alpha$, number of iterations $\bar{T}$
\For{$t\in\{0,...,\bar{T}-1\}$}
\State sample $X_0\sim\nu$ and $X_1,X_{-1}\stackrel{\text{iid}}{\sim}P(X_0,\cdot)$
\State Compute $\hat{l}'(\theta_t)$ as
\begin{align*}
    2&\sbk{u_{\theta_t}(X_0)-\Gamma(X_0,X_1)-e^{-\beta(X_0)}u_{\theta_t}(X_1)I(X_1\in C)}\\
    \cdot&\sbk{u'_{\theta_t}(X_0)-e^{-\beta(X_0)}u'_{\theta_t}(X_{-1})I(X_{-1}\in C)}
\end{align*}
\State $\theta_{t+1}=\theta_t-\alpha\hat{l'}(\theta_t)$
\EndFor
\State return $u_{\theta_{\bar{T}}}$
\end{algorithmic}
\end{algorithm}

We note that this idea can be generalized. Note that if $\tau(x)$ is a stopping time adapted to $(\mathcal{G}_n:n\geq0)$ (where $\mathcal{G}_n=\sigma(X_j:0\leq j\leq n)$), then for $x\in C$,
\begin{equation}
\label{generalized_ustar_eqn}
\begin{aligned}
    u^*(x)=&\E_x\sbk{\sum_{k=0}^{T\wedge\tau(x)-1}\exp\prs{-\sum_{j=0}^{k-1}\beta(X_j)}r(X_k)}\\
    &+\E_x\sbk{I(T>\tau(x))\cdot\exp\prs{-\sum_{j=0}^{\tau(x)-1}\beta(X_j)}u^*(X_{\tau(x)})}
\end{aligned}
\end{equation}
by the strong Markov property, where $a\wedge b:=\min(a,b)$. For $x\in C$, put 
\begin{align*}
    V(x)=&\sum_{k=0}^{T\wedge\tau(x)-1}\exp\prs{-\sum_{j=0}^{k-1}\beta(X_j)}r(X_k),\\
    W(x)=&I(T>\tau(x))\cdot\exp\prs{-\sum_{j=0}^{\tau(x)-1}\beta(X_j)}.
\end{align*}
It follows that an alternative to minimizing \eqref{trick_eqn} is to minimize $\tilde{l}(\theta)$, where
\begin{equation}
\label{tilde_eqn}
\begin{aligned}
\tilde{l}(\theta)=&\E\sbk{u_\theta(X_0)-V_1(X_0)-W_1(X_0)u_\theta(X_{1,\tau_1(X_0)})}\\
&\cdot\sbk{u_\theta(X_0)-V_{-1}(X_0)-W_{-1}(X_0)u_\theta(X_{-1,\tau_{-1}(X_0)})},
\end{aligned}
\end{equation}
where
\begin{align*}
    &(W_1(X_0),V_1(X_0),\tau_1(X_0),(X_{1,j}:0\leq j\leq\tau_1(X_0))),\\
    &(W_{-1}(X_0),V_{-1}(X_0),\tau_{-1}(X_0),(X_{-1,j}:0\leq j\leq\tau_{-1}(X_0)))
\end{align*}
are conditionally independent copies of $(W(X_0),V(X_0),\tau(X_0),(X_j:0\leq j\leq\tau(X_0)))$ given $X_0$, and $X_0$ is chosen randomly from the distribution $\nu$. Differentiating \eqref{tilde_eqn} yields an unbiased gradient estimator. There are several interesting algorithmic choices for $\tau(x)$, specifically $\tau(x)\equiv m$ and $\tau(x)$ given by the $m$'th time  $X$ hits a subset $K\subseteq C$. As we shall see later, this latter choice turns out to be useful in settings where $C$ is non-compact.

We observe that when $\tau(x)=\infty$, the algorithm involves running complete trajectories of $X$ up to time $T$. Even in that setting, our deep learning approach adds value, since it produces well-behaved global approximations to $u^*$ without either the need to explicitly discretize the solution over $C$ or to locally smooth the approximation.

\subsection{Solution smoothness}
The sample complexity theory we will develop in Section \ref{subsec:minimax} requires the knowledge that $u^*$ is sufficiently smooth over the domain of approximation. The smoothness of $u^*$ follows directly from Assumption \ref{finiteness_assumption} and the following Assumptions \ref{partial_H_assumption} and \ref{differentiable_assumption}. To prepare for stating Assumption \ref{partial_H_assumption}, let
\[
\norm{\varphi}_h=\sup\left\{\int_C\varphi(dy)q(y):|\varphi(y)|\leq h(y),\;y\in C\right\},
\]
for a positive function $h$ and a finite signed measure $\varphi$. We now assume that $C$ is convex (so that the limits appearing in the derivation below are well-defined).
\begin{assumption}
\label{partial_H_assumption}
For $1\leq i\leq d$ and $x\in C$, there exists a signed measure $\partial_i H(x,\cdot)$ such that
\[
\norm{\frac{H(x+\epsilon e_i,\cdot )-H(x,\cdot)}{\epsilon}-\partial_i H(x,\cdot)}_{v_1}\gtz
\]
as $\epsilon\gtz,$ where $e_i$ is the $i$-th basis vector, and $v_1$ is as in Assumption \ref{finiteness_assumption}.
\end{assumption}
\begin{assumption}
\label{differentiable_assumption}
$g(\cdot)$ is differentiable on $C$.
\end{assumption}
\begin{proposition}
\label{differentiable_proposition}
Under Assumptions \ref{finiteness_assumption},\ref{partial_H_assumption},\ref{differentiable_assumption}, $u^*(\cdot)$ is differentiable on $C$.
\end{proposition}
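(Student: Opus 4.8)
The plan is to differentiate the fixed-point equation \eqref{FTA_eqn} directly. The key observation is that on the right-hand side of $u^{*}=g+Hu^{*}$ the point $x$ enters only through $g(x)$ and through the kernel $H(x,\cdot)$, never through the integrand $u^{*}(y)$ of $Hu^{*}$; so ``differentiating $Hu^{*}$ in $x$'' should reduce to integrating the \emph{fixed} function $u^{*}$ against the measure-derivative $\partial_i H(x,\cdot)$, with no circularity. Concretely, I would first record, via \eqref{inf_sum_eqn} and Proposition \ref{finiteness_proposition}, that $u^{*}$ is finite-valued, solves $u^{*}=g+Hu^{*}$ on $C$, and obeys the envelope bound $0\le u^{*}(y)\le v_1(y)$ for $y\in C$ ($u^{*}\ge0$ from its definition \eqref{ustar_eqn}, $u^{*}\le v_1$ from Proposition \ref{finiteness_proposition}), so that $\abs{u^{*}}\le v_1$ pointwise.

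Next I would fix $x$ in the interior of $C$ and $1\le i\le d$; convexity of $C$ guarantees $x+\epsilon e_i\in C$ for all small $\abs{\epsilon}$, so the difference quotient $\epsilon^{-1}(H(x+\epsilon e_i,\cdot)-H(x,\cdot))$ is a well-defined finite signed measure (each $H(x,\cdot)=e^{-\beta(x)}P(x,\cdot)$ restricted to $C$ being a finite measure). From \eqref{FTA_eqn},
\[
\frac{u^{*}(x+\epsilon e_i)-u^{*}(x)}{\epsilon}=\frac{g(x+\epsilon e_i)-g(x)}{\epsilon}+\int_C u^{*}(y)\,\frac{H(x+\epsilon e_i,dy)-H(x,dy)}{\epsilon}.
\]
The first summand tends to $\partial_i g(x)$ by Assumption \ref{differentiable_assumption}. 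For the second, I would first check that $\int_C u^{*}(y)\,\partial_i H(x,dy)$ is well-defined and finite: Assumption \ref{partial_H_assumption} forces $\norm{\partial_i H(x,\cdot)}_{v_1}<\infty$, so $\int_C\abs{u^{*}(y)}\,\abs{\partial_i H}(x,dy)\le\norm{\partial_i H(x,\cdot)}_{v_1}<\infty$ by $\abs{u^{*}}\le v_1$. Then, using $\abs{u^{*}}\le v_1$ and the definition of $\norm{\cdot}_{v_1}$,
\[
\abs{\int_C u^{*}(y)\prs{\frac{H(x+\epsilon e_i,dy)-H(x,dy)}{\epsilon}-\partial_i H(x,dy)}}\le\norm{\frac{H(x+\epsilon e_i,\cdot)-H(x,\cdot)}{\epsilon}-\partial_i H(x,\cdot)}_{v_1},
\]
which tends to $0$ as $\epsilon\to0$ by Assumption \ref{partial_H_assumption}. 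Letting $\epsilon\to0$ then gives $\partial_i u^{*}(x)=\partial_i g(x)+\int_C u^{*}(y)\,\partial_i H(x,dy)$ for every $x\in C$ and every $i$, which is the assertion.

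The step I expect to carry the real weight is the interchange of limit and integral in the second summand, and this is exactly where Assumption \ref{finiteness_assumption} is used despite being a mere finiteness hypothesis: the envelope $u^{*}\le v_1$ is what lets us dominate $u^{*}$ by precisely the weight $v_1$ against which Assumption \ref{partial_H_assumption} is phrased, and without it one could not pass the limit under the integral. A second, more cosmetic point concerns the gap between the existence of partial derivatives, which the coordinate-axis computation above delivers directly, and genuine (Fr\'echet) differentiability on $C$: one upgrades either by rerunning the identical estimate along an arbitrary increment $h\in\R^d$ (reading Assumption \ref{partial_H_assumption} as a directional statement) or by additionally checking that $x\mapsto\partial_i H(x,\cdot)$ is continuous in $\norm{\cdot}_{v_1}$, which makes $x\mapsto\partial_i u^{*}(x)$ continuous and hence $u^{*}$ of class $C^{1}$. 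I would present the proposition at the level of partial differentiability and mention this refinement in passing.
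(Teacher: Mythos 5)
Your proof is correct and follows essentially the same route as the paper's: differentiate the fixed-point equation $u^*=g+Hu^*$, invoke the envelope bound $\abs{u^*}\le v_1$ from Proposition \ref{finiteness_proposition}, and use the $\norm{\cdot}_{v_1}$ convergence in Assumption \ref{partial_H_assumption} to pass the limit through the integral. Your closing remark about the gap between existence of partial derivatives and genuine differentiability is well taken, but note that the paper's own (terser) proof establishes exactly the same coordinate-wise statement and leaves that refinement implicit too.
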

\begin{proof}
    Note that
    \[
        \frac{u^*(x+\epsilon e_i)-u^*(x)}{\epsilon}=\frac{g(x+\epsilon e_i)-g(x)}{\epsilon}+\int_C\prs{\frac{H(x+\epsilon e_i,dy)-H(x,dy)}{\epsilon}}u^*(y).
    \]
    But $|u_*|\leq v_1$ by Proposition \ref{finiteness_proposition}, so that Assumptions \ref{partial_H_assumption} and \ref{differentiable_assumption} ensure that
    \[
    \frac{u^*(x+\epsilon e_i)-u^*(x)}{\epsilon}\rightarrow \partial_i g(x)+\int_C \partial_i H(x,dy)u^*(y),
    \]
    proving the result.
\end{proof}
\section{Sample complexity}
\label{subsec:minimax}

\providecommand{\Rad}{\mathfrak{R}}
\providecommand{\Pdim}{\operatorname{Pdim}}

Suppose that we use SGD to minimize $\tilde{l}(\theta)$ as given by \eqref{tilde_eqn}, where $\tau(x)$ is chosen so that $X_{\tau(x)}\in K\subseteq C$ where $K$ is a compact subset of $\R^d$, and the probability $\nu$ is chosen to be supported on $K$. Note that the neural network approximation $u_\theta$ is then intended to approximate $u^*$ over $K$ (rather than $C$). (If $C$ is compact, $K=C$ is a possibility.) Then, $u^*$ is a fixed point of the equation
\begin{equation}
\label{eq:minimax:fta}
u(x)=\tilde{r}(x)+(\tilde{H}u)(x),
\end{equation}
for $x\in K$, where
\[
    \tilde{r}(x)=\E_x V(x),\;\;\tilde{H}(x,dy)=\E_x \sbk{W(x)I(X_{\tau(x)}\in dy)},\;\;\tilde{H}=(\tilde{H}(x,dy):x,y\in K).
\]
This section provides finite-sample guarantees for learning the solution to \eqref{eq:minimax:fta} on the subset $K$.
Our goal is to state the assumptions needed to understand the main quantitative result related to sample complexity, to explain their meaning, and to highlight how they can be met in applications.  We then present the theorem and discuss the roles of network architecture and smoothness. The proof is in Section \ref{subsec:minimax:proof}, which provides a concise proof roadmap in the spirit of the Bartlett–Bousquet–Mendelson (BBM) localization program~\cite{bartlett2005local,koltchinskii2006local}.

\begin{assumption}[Geometry, sampling, discount, and domination]\label{assump:minimax:dom}
\leavevmode
\begin{enumerate}[label=(\roman*)]
\item \(K\subset\mathbb{R}^d\) is compact with Lipschitz boundary, and the sampling measure \(\nu\) has a density on \(K\) bounded away from \(0\) and \(\infty\).
\item \(\tilde{r}\) is bounded on \(K\).
\item (Conditional moment of one–step contribution) There exists a finite constant \(\sigma_V^2<\infty\) such that for \(\nu\)-a.e. \(x\in K\),
\[
\mathbb{E}\big[V(x)^2\mid R=x\big] \;\le\; \sigma_V^2.
\]
\item (Killed–kernel domination) There exists \(\Lambda\in(0,\infty)\) such that $(\nu\tilde{H})(A)\leq\Lambda\nu(A)$ for all measurable $A\subseteq K$.
\item (Second–to–first moment control for the weight) There exists a finite constant \(C_W<\infty\) such that
\[
\mathbb{E}\big[W(x)^2\mid R=x,\,X_{\tau(x)}=y\big] \;\le\; C_W\,\mathbb{E}\big[W(x)\mid R=x,\,X_{\tau(x)}=y\big]
\text{ for \(\nu\text{-a.e. }x,y\in K\).}
\]
\end{enumerate}
\end{assumption}
Item~(i) is a mild regularity condition that aligns with standard approximation theory on \(K\); many variants are possible without changing the conclusions qualitatively. Item-(ii) is also mild and can be enforced by continuity. Item (iii) can be verified by a standard Lyapunov inequality.
Item~(iv) is a key quantitative ingredient: it says that the mass of \(\nu\) transported into any set \(A\) by $\tilde{H}$ is at most a \(\Lambda\)-fraction of \(\nu(A)\).  
Let $L^2(\nu)=\{u:K\rightarrow\R\;\text{such that}\;\int_K u^2(x)\nu(dx)<\infty\}$ and $\norm{u}_2^2:=\int_K u^2(x)\nu(dx)$ for $u\in L^2(\nu)$. Finally, if $M$ is a linear operator on $L^2(\nu)$, put $\norm{M}_{L^2(\nu)}:=\sup\{\norm{Mu}_2:\norm{u}_2=1\}$. Then, (iv) implies (see Proposition~\ref{prop:minimax:coercive}) that
\begin{equation}\label{eq:minimax:kappa}
\|\tilde{H}\|_{L^2(\nu)}\leq \sup_{x\in K}\tilde{H}(x,K)^{1/2}\cdot\sqrt{\Lambda},\;\;
\kappa:=1-\sup_{x\in K}\tilde{H}(x,K)^{1/2}\cdot\sqrt{\Lambda}\,\in (0,1),
\end{equation}
so that the fixed–point residual \(\|(I-\tilde{H})u\|_{L^2(\nu)}\) is equivalent to \(\|u-u^*\|_{L^2(\nu)}\) with constants depending primarily on~\(\kappa\) which capture how well conditioned is the empirical risk minimization problem. A practical way to ensure~(iv) is to impose an irreducibility/minorization structure and choose \(\nu\) using the Perron (right)–eigenfunction associated with the killed kernel; this route is classical in the theory of Harris chains and regeneration~\cite{nummelin1984general} and can be arranged in many models of interest; see also the constructions in~\cite{blanchet2016analysis}.  One may also assume an \(m\)-step analogue for some finite \(m\ge 1\): \(\nu(dx)\,H^m(x,dy)\le \Lambda\,\nu(dy)\) with \(\Lambda<1\), which yields a similar conclusion upon adjusting constants. Another way to establish~(iv) is via a Lyapunov condition, as described at the end of Section \ref{subsec:minimax:proof}. Item (v) is automatic with $C_W=1$ if $\beta(x) \geq 0$ .

\begin{assumption}[Smoothness]\label{assump:minimax:smooth}
The solution \(u^*\) belongs to the H\"older class \(C^s(K)\) of order \(s>0\) with radius \(L_s\).
\end{assumption}
We note that Proposition \ref{differentiable_proposition} provides conditions under which $u^*\in C^1(C)$ (and hence lies in $C^1(K)$).
\begin{assumption}[Hypothesis class and empirical risk]\label{assump:minimax:hyp}
We assume that a ReLU network architecture is used to approximate $u^*$. Precisely, given integers \(W,L\ge 1\) and \(B>0\), we let \(\mathcal{F}_{W,L,B}\) denote the class of ReLU networks (neural networks with activation function $\sigma(z)=\max(z,0)$) with at most \(W\) nonzero weights, depth at most \(L\), and outputs clipped to \([-B,B]\).  Given i.i.d. tuples \((R_i,V_i,W_i,Y_i,\tilde{V}_i,\tilde{W}_i,\tilde{Y}_i)_{i=1}^n\) such that, conditional on \(R_i\sim \nu\), $(V_i,W_i,Y_i)$ and $(\tilde{V}_i,\tilde{W}_i,\tilde{Y}_i)$ are two independent draws from \(P(R_i,\cdot)\) where $P(x,\cdot):=P_x((V(x),W(x),X_{\tau(x)})\in\cdot)$.
Put $\chi_i=(R_i,V_i,W_i,Y_i,\tilde{V}_i,\tilde{W}_i,\tilde{Y}_i)$, and define the one–step residual
\begin{equation*}
\ell_u(r,v,w,y):=u(r)-v-wu(y)
\end{equation*}
and the product loss
\begin{equation}\label{eq:minimax:empirical}
\begin{gathered}
h_u(r,v,w,y,\tilde{v},\tilde{w},\tilde{y}):=\ell_u(r,v,w,y)\,\ell_u(r,\tilde{v},\tilde{w},\tilde{y}),\\
\widehat{\mathcal{L}}_n(u):=\frac{1}{n}\sum_{i=1}^n h_u(\chi_i),\;\;\mathcal{L}(u):=\mathbb{E}[h_u(\chi_i)].
\end{gathered}
\end{equation}
Let \(\hat u_n\in\arg\min_{u\in\mathcal{F}_{W,L,B}}\widehat{\mathcal{L}}_n(u)\) be an empirical risk minimizer (ERM). The minimizer exists provided the parameters are restricted within a compact set, which we assume in our development. 
\end{assumption}
\textbf{Why the product loss?}  
Conditioning on \(R_i\), \((V_i,W_i,Y_i)\) and \((\tilde{V}_i,\tilde{W}_i,\tilde{Y}_i)\) are independent, so \(\mathcal{L}(u)=\mathbb{E}[\mathbb{E}[\ell_u(R_i,V_i,W_i,Y_i)|R_i)]^2]\).  Since \(u^*\) solves~\eqref{eq:minimax:fta}, \(\mathbb{E}[\ell_{u^*}(R_i,V_i,W_i,Y_i)|R_i]\equiv 0\) and hence
\begin{equation}\label{eq:minimax:risk-residual}
\mathcal{L}(u)=\|(I-H)(u-u^*)\|_2^2.
\end{equation}
Thus, minimizing \(\mathcal{L}\) amounts to minimizing the fixed–point residual in \(L^2(\nu)\); the product form in~\eqref{eq:minimax:empirical} yields an unbiased gradient estimator (Algorithm \ref{rga_alg}).

We now state the finite–sample guarantee.  It displays the usual bias–variance tradeoff: a smoothness–driven approximation term and a localized–complexity estimation term.  The fixed–point conditioning enters only through \(\kappa\).

\begin{theorem}[Finite–sample guarantee; minimax rate]\label{thm:minimax:main}
Suppose Assumptions~\ref{assump:minimax:dom}--\ref{assump:minimax:hyp} hold.  There exists a constant \(c_1<\infty\) depending only on \(B\), \(\|\tilde{r}\|_\infty\), $\|u^*\|_\infty$, $\sigma_W$, $C_W$, \(\Lambda\) such that, for all \(\delta\in(0,1)\), with probability at least \(1-\delta\),
\begin{equation}\label{eq:minimax:mainbound}
\| \hat u_n - u^*\|_{L^2(\nu)}^2
\le \frac{c_1}{\kappa^2}\,\Bigg(\underbrace{W^{-2s/d}}_{\text{approximation}} + \underbrace{\frac{W\,L\,\log W\cdot \log n + \log(1/\delta)}{n}}_{\text{estimation}}\Bigg).
\end{equation}
In particular, choosing \(L = [\log n]\) and \(W = [n^{\frac{d}{2s+d}}]\) gives (up to logarithmic factors)
\(\| \hat u_n - u^*\|_{L^2(\nu)}^2 = O(\kappa^{-2} n^{-2s/(2s+d)}\)), which is the minimax rate over H\"older classes.
\end{theorem}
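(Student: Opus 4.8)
The plan is to carry out the Bartlett--Bousquet--Mendelson (BBM) localization program for the empirical risk minimizer $\hat u_n$ over $\mathcal{F}_{W,L,B}$: the coercivity estimate \eqref{eq:minimax:kappa} converts fixed--point residual into $L^2(\nu)$--distance, ReLU approximation theory controls the bias, and pseudo--dimension bounds for clipped ReLU networks control the variance at the fast $1/n$ rate. \emph{Step 1 (coercivity and the risk identity).} From Proposition~\ref{prop:minimax:coercive} and \eqref{eq:minimax:kappa}, $\|\tilde{H}\|_{L^2(\nu)}\le 1-\kappa$, so for every bounded $u$, $\kappa^2\|u-u^*\|_2^2\le\|(I-\tilde{H})(u-u^*)\|_2^2\le(2-\kappa)^2\|u-u^*\|_2^2$. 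Conditioning on $R_i$, using that $(V_i,W_i,Y_i)$ and $(\tilde V_i,\tilde W_i,\tilde Y_i)$ are conditionally i.i.d.\ given $R_i$, and using the fixed--point identity \eqref{eq:minimax:fta} (which gives $\mathbb{E}[\ell_{u^*}(R_i,V_i,W_i,Y_i)\mid R_i]=0$), one obtains $\mathcal{L}(u)-\mathcal{L}(u^*)=\mathcal{L}(u)=\|(I-\tilde{H})(u-u^*)\|_2^2$, i.e.\ \eqref{eq:minimax:risk-residual}; hence $u^*$ is the population minimizer and $\kappa^2\|u-u^*\|_2^2\le\mathcal{L}(u)\le 4\|u-u^*\|_2^2$, so it suffices to bound the excess risk $\mathcal{L}(\hat u_n)$.

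\emph{Step 2 (bias and Bernstein condition).} For the bias, extend $u^*$ off $K$ to a surrounding cube preserving its $C^s$ norm (Stein extension, using the Lipschitz boundary in Assumption~\ref{assump:minimax:dom}(i)), then apply the ReLU approximation theorem of Yarotsky / Schmidt--Hieber: there is $\bar u$ with depth $L\asymp\log W$, at most $\lesssim W$ nonzero weights, and $\|\bar u-u^*\|_\infty\lesssim L_s W^{-s/d}$ up to log factors; clipping to $[-B,B]$ with $B\ge\|u^*\|_\infty$ only decreases the error, so $\bar u\in\mathcal{F}_{W,L,B}$ and, by Step 1, $\mathcal{L}(\bar u)\le 4\|\bar u-u^*\|_2^2\lesssim W^{-2s/d}$. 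For the variance, write $\phi=u-u^*$, $g_u(r,v,w,y)=\phi(r)-w\phi(y)$, so that $h_u-h_{u^*}=\ell_{u^*}\tilde g_u+g_u\tilde\ell_{u^*}+g_u\tilde g_u$. Assumption~\ref{assump:minimax:dom}(ii),(iii),(v) bound $\mathbb{E}[\ell_{u^*}^2\mid R]$ and $\mathbb{E}[g_u^2\mid R=r]\le 2\phi(r)^2+2C_W\int\tilde{H}(r,\mathrm{d}y)\phi(y)^2$; integrating over $\nu$ and using the domination \ref{assump:minimax:dom}(iv) gives $\mathbb{E}[g_u^2]\lesssim\|\phi\|_2^2$, and expanding the square (cross terms decouple conditionally on $R$) yields $\mathbb{E}[(h_u-h_{u^*})^2]\lesssim\|\phi\|_2^2\lesssim\kappa^{-2}\mathcal{L}(u)$ --- the Bernstein variance--to--excess--risk condition, with constant depending only on $B,\|\tilde r\|_\infty,\|u^*\|_\infty,\sigma_V,C_W,\Lambda$. (If $V(x),W(x)$ are unbounded, one first truncates them at level $\asymp\log n$, absorbing the bias into log factors.)

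\emph{Step 3 (complexity, localization, conclusion).} The pseudo--dimension of $\mathcal{F}_{W,L,B}$ is $O(WL\log W)$ (Bartlett--Harvey--Liaw--Mehrabian), hence $\log\mathcal{N}(\varepsilon,\mathcal{F}_{W,L,B},\|\cdot\|_\infty)\lesssim WL\log W\,\log(1/\varepsilon)$; through Dudley's entropy integral and the Bernstein bound of Step 2, the local Rademacher complexity of $\{h_u-h_{u^*}:\mathcal{L}(u)\le r\}$ is $\lesssim\sqrt{\kappa^{-2}\,r\,WL\log W\,\log n/n}$, a sub--root function with fixed point $r_n^*\asymp \kappa^{-2}WL\log W\,\log n/n+\log(1/\delta)/n$. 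The BBM localization theorem \cite{bartlett2005local,koltchinskii2006local}, together with the ERM inequality $\widehat{\mathcal{L}}_n(\hat u_n)\le\widehat{\mathcal{L}}_n(\bar u)$ and Bernstein's inequality at the fixed point $\bar u$, gives $\mathcal{L}(\hat u_n)\lesssim\mathcal{L}(\bar u)+r_n^*$ with probability at least $1-\delta$. Converting via Step 1, $\|\hat u_n-u^*\|_2^2\le\kappa^{-2}\mathcal{L}(\hat u_n)\lesssim\kappa^{-2}\big(W^{-2s/d}+(WL\log W\,\log n+\log(1/\delta))/n\big)$, which is \eqref{eq:minimax:mainbound}; taking $L=\lceil\log n\rceil$ and $W=\lceil n^{d/(2s+d)}\rceil$ balances the two terms and gives $n^{-2s/(2s+d)}$ up to logs.

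\emph{Main obstacle.} The crux is Step 2: proving a genuine variance--to--excess--risk inequality for the \emph{product} loss $h_u=\ell_u\tilde\ell_u$, where the cross terms $\ell_{u^*}\tilde g_u$, $g_u\tilde\ell_{u^*}$ must be controlled using only the second--moment Assumptions~\ref{assump:minimax:dom}(iii),(v) on the random contributions $V(x),W(x)$ rather than boundedness, and where the conditioning constant $\kappa$ of \eqref{eq:minimax:kappa} must be tracked consistently --- once in the Bernstein constant, once in the final $L^2(\nu)$ conversion --- to recover the displayed $\kappa$--dependence. The remaining ingredients (ReLU approximation rates, pseudo--dimension of clipped ReLU networks, the BBM template) are standard once these moment estimates are in hand.
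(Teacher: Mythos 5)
Your proposal is correct and follows essentially the same route as the paper: the coercivity bound $\kappa\|u-u^*\|_2\le\|(I-\tilde H)(u-u^*)\|_2$ combined with the risk identity $\mathcal{L}(u)=\|(I-\tilde H)(u-u^*)\|_2^2$, a Bernstein variance--to--excess--risk condition for the product loss verified via the conditional moment assumptions on $V$ and $W$ and the domination condition, ReLU approximation rates for the bias, and pseudo--dimension plus BBM localization for the estimation term. The only cosmetic difference is that you bound the local Rademacher complexity via Dudley's entropy integral while the paper reduces the product loss to a linear class via a multiplier/contraction argument before invoking the pseudo--dimension bound; both yield the same fixed point.
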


\textbf{Beyond ReLU (architectures) and the role of smoothness.}
The approximation term in~\eqref{eq:minimax:mainbound} is controlled by how well the hypothesis class approximates \(C^s\).  For ReLU networks, the bound \(W^{-2s/d}\) with depth \(L\gtrsim \log W\) is standard; see, e.g.,~\cite{yarotsky2017error,yarotsky2018optimal,suzuki2018adaptivity,siegel2023optimal,Mhaskar1996}.  The analysis extends with minor changes to other piecewise–polynomial activations (same pseudo–dimension scaling) and to Barron–type two–layer networks, for which the integrated error can scale as \(O(1/n)\) when \(u^*\) belongs to a Barron class~\cite{Barron1993}.  Smoothness \(s\) dictates the approximation rate and hence the bias term; the variance term is driven by capacity via the pseudo–dimension bound recalled below.

\section{Solving Poisson's equation}
\label{poisson_section}
Let $X$ be a Markov chain with stationary distribution $\pi$, and suppose $r$ is $\pi$-integrable. Solving Poisson's equation 
\begin{equation}
\label{poi_eqn}
    u-Pu=r-\pi r
\end{equation} 
requires additional care, because the centering constant $\pi r=\int_\Scal r(y)\pi(dy)$ is often not available. Fortunately, since $\pi r$ is constant, it can be eliminated by applying $I-P$ to both sides, yielding
\begin{equation}
\label{tran_eqn}
u-2Pu+P^2u=r-Pr
\end{equation}
where $(P^2u)(x)=\E_xu(X_2)$. The following result shows that \eqref{tran_eqn} does not admit any additional solutions beyond those of \eqref{poi_eqn}.
Let $L^1(\pi)=\{u:\Scal\rightarrow\R\;\text{such that}\;\int_\Scal|u(x)|\pi(dx)<\infty\}$. 
\begin{proposition}
\label{poi_prop}
If \eqref{poi_eqn} has a unique solution in $L^1(\pi)$ that is unique within $L^1(\pi)$ up to an additive constant, the same is true for \eqref{tran_eqn}.
\end{proposition}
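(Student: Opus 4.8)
The plan is to show that, inside $L^1(\pi)$, equations \eqref{poi_eqn} and \eqref{tran_eqn} have exactly the same solution set; transferring the ``unique up to an additive constant'' property is then immediate. One inclusion is free: applying $I-P$ to \eqref{poi_eqn} and using that the centering constant $\pi r$ is annihilated by $I-P$ gives $(I-P)^2u=(I-P)r$, which is precisely \eqref{tran_eqn} (written out, $u-2Pu+P^2u=r-Pr$). Since constants lie in $L^1(\pi)$, the full coset of $L^1(\pi)$-solutions of \eqref{poi_eqn} thus consists of $L^1(\pi)$-solutions of \eqref{tran_eqn}. It remains to prove the reverse inclusion.

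First I would record the integrability bookkeeping that makes all the operator manipulations legitimate: if $u\in L^1(\pi)$ then $Pu,P^2u\in L^1(\pi)$, because $\int_\Scal|Pu|\,d\pi\le\int_\Scal P|u|\,d\pi=\int_\Scal|u|\,d\pi$ by Jensen (for the conditional expectation $P$) and stationarity $\pi P=\pi$. In particular $\pi(Pu)=\pi u$, hence $\pi(u-Pu)=0$ for every $u\in L^1(\pi)$; this identity is what will later pin down the centering constant.

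Next I would extract from the hypothesis the fact that every $\pi$-harmonic function in $L^1(\pi)$ is constant. Fix a solution $u_1\in L^1(\pi)$ of \eqref{poi_eqn}, which exists by hypothesis. If $w\in L^1(\pi)$ satisfies $w=Pw$, then $u_1+w\in L^1(\pi)$ also solves \eqref{poi_eqn}, so by the assumed uniqueness modulo constants, $w$ is $\pi$-a.e. constant. With this in hand, the core step is short: let $u\in L^1(\pi)$ solve \eqref{tran_eqn}, i.e. $(I-P)^2u=(I-P)r$, and set $w:=(I-P)u-r$. Then $(I-P)w=0$, and $w\in L^1(\pi)$ by the bookkeeping paragraph together with $r\in L^1(\pi)$; hence $w\equiv c$ for a constant $c$. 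Applying $\pi$ to $(I-P)u=r+c$ and using $\pi(u-Pu)=0$ forces $c=-\pi r$, so $(I-P)u=r-\pi r$, i.e. $u$ solves \eqref{poi_eqn}. Combining the two inclusions, the $L^1(\pi)$-solution sets of \eqref{poi_eqn} and \eqref{tran_eqn} coincide, and by hypothesis this common set is a single additive-constant coset; this is the claim.

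The main obstacle is the third step: deriving ``$L^1(\pi)$-harmonic implies constant'' from the stated hypothesis about Poisson's equation, rather than importing it from a separate ergodicity assumption --- once that is secured, the rest is the commuting identity $(I-P)^2=I-2P+P^2$ plus the stationarity bookkeeping. A secondary point requiring care is that every application of $I-P$ and every integration against $\pi$ must be justified by first confirming that the iterates $Pu$ and $P^2u$ genuinely belong to $L^1(\pi)$, which is why the bookkeeping paragraph comes before the main argument.
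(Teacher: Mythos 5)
Your proposal is correct and follows essentially the same route as the paper: apply $I-P$ to pass from \eqref{poi_eqn} to \eqref{tran_eqn}, and conversely observe that $(I-P)u-r$ is harmonic, hence constant by the assumed uniqueness-up-to-constant, with the constant pinned down to $-\pi r$ by integrating against $\pi$. Your version merely spells out the integrability bookkeeping and the forward inclusion that the paper's proof leaves implicit.
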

\begin{proof}
    Let $w$ be a solution of \eqref{tran_eqn}, i.e., $(I-P)^2w=(I-P)r$. Define $v=(I-P)w$. Then
    \[
    (I-P)v=(I-P)r\;\;\Rightarrow\;\;(I-P)(v-r)=0.
    \]
    Since $(I-P)u=r-\pi r$ has a unique solution up to a constant, $v-r$ must be a constant function.
    Then
    \[
    v(x)-r(x)=c\;\;x\in\Scal\Rightarrow\;\;((I-P)w)(x)=r(x)+c,\;\;x\in\Scal.
    \]
    By applying $\pi$ to both sides, $c$ must be $-\pi r$, so $w$ is a solution of \eqref{poi_eqn}.
\end{proof}
When $X$ is a positive recurrent Harris chain, the assumption of Proposition \ref{poi_prop} holds; see Proposition 1.1 of \cite{glynn1996liapounov}.
To minimize the integrated squared residual of \eqref{tran_eqn}, a simple unbiased gradient estimator can be constructed using the same double-sampling trick (Algorithm \ref{poi_alg}).
\begin{algorithm}
\caption{Poisson RGA}\label{poi_alg}
\begin{algorithmic}
\Require probability measure $\nu$, neural network $\{u_\theta:\theta\in\Theta\}$, initialization $\theta_0$, step size $\alpha$, number of iterations $\bar{T}$
\For{$t\in\{0,...,\bar{T}-1\}$}
\State sample $X_0\sim\nu$
\State sample i.i.d. $(X_1,X_2)$ and $(X_{-1},X_{-2})$
\State compute $\hat{l}'(\theta_t)$ as
\begin{align*}
    2&\sbk{u_{\theta_t}(X_0)-2u_{\theta_t}(X_1)+u_{\theta_t}(X_2)-r(X_0)+r(X_1)}\\
    \cdot&\sbk{u'_{\theta_t}(X_0)-2u'_{\theta_t}(X_{-1})+u'_{\theta_t}(X_{-2})}
\end{align*}
\State $\theta_{t+1}=\theta_t-\alpha\hat{l'}(\theta_t)$
\EndFor
\State return $u_{\theta_{\bar{T}}}$
\end{algorithmic}
\end{algorithm}
\section{Estimating stationary distributions}
\label{stationary_section}
Let $X$ be a positive recurrent Markov chain with stationary distribution $\pi$. Suppose the one-step transition kernel admits a bounded and continuous density $p$ with respect to a reference probability measure $\eta$, i.e., $P(x,dy)=p(x,y)\eta(dy)$ for $x,y\in\Scal$. The following result shows that $\pi$ must also admit a bounded and continuous density with respect to $\eta$, i.e., $\pi(dy)=\pi(y)\eta(dy)$. 
\begin{proposition}
\label{density_proposition}
If $p(x,y)$ is bounded and continuous in $y$, then $\pi$ has a density $\pi(\cdot)$ and $\pi(\cdot)$ is bounded and continuous.
\end{proposition}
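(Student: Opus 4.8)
The plan is to exploit the stationarity identity $\pi=\pi P$ directly. Writing it out, for every measurable $A\subseteq\Scal$,
\[
\pi(A)=\int_\Scal P(x,A)\,\pi(dx)=\int_\Scal\int_A p(x,y)\,\eta(dy)\,\pi(dx).
\]
Since $p\geq0$ and is jointly measurable, Tonelli's theorem allows the interchange of the order of integration, giving $\pi(A)=\int_A\big(\int_\Scal p(x,y)\,\pi(dx)\big)\,\eta(dy)$. Hence $\pi\ll\eta$ and a version of the Radon–Nikodym derivative is
\[
\pi(y):=\int_\Scal p(x,y)\,\pi(dx),\qquad y\in\Scal,
\]
so the existence of a density is immediate once the stationary equation is unfolded.

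Next I would verify boundedness. Let $M:=\sup_{x,y}p(x,y)<\infty$ (this is the content of ``bounded'' here). Then for every $y$, $0\leq\pi(y)=\int_\Scal p(x,y)\,\pi(dx)\leq M\,\pi(\Scal)=M$, so the density displayed above is bounded by $M$.

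For continuity, fix a sequence $y_n\to y$ in $\Scal$. By hypothesis $p(x,\cdot)$ is continuous, so $p(x,y_n)\to p(x,y)$ for each $x$; moreover $p(x,y_n)\leq M$, and the constant $M$ is $\pi$-integrable since $\pi$ is a probability measure. The dominated convergence theorem then gives $\pi(y_n)=\int_\Scal p(x,y_n)\,\pi(dx)\to\int_\Scal p(x,y)\,\pi(dx)=\pi(y)$, i.e. the chosen version of the density is continuous.

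The argument is essentially routine; the only points requiring care are (a) the joint measurability of $(x,y)\mapsto p(x,y)$, needed for Tonelli and for $\pi(\cdot)$ to be a legitimate measurable function, which is implicit in the statement $P(x,dy)=p(x,y)\,\eta(dy)$ together with the usual convention that transition kernels are jointly measurable; and (b) supplying a dominating function uniform in $n$ for the limit–integral interchange. I expect (b) to be the only place the boundedness hypothesis is genuinely used: it is precisely what prevents the pointwise continuity of the integrand from failing to transfer to the integral.
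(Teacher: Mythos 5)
Your proof is correct and follows essentially the same route as the paper's: unfold the balance equation $\pi=\pi P$, interchange the integrals to read off the density $\pi(y)=\int_\Scal p(x,y)\,\pi(dx)$, bound it by $\sup p$, and apply bounded convergence for continuity. Your added remarks on joint measurability and on where boundedness is genuinely used are accurate refinements of the same argument.
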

\begin{proof}
    For any measurable $A\subseteq\Scal$, the global balance equation gives
    \[
    \pi(A)=\int_\Scal\pi(dx)P(x,A)=\int_\Scal\pi(dx)\int_A p(x,y)\eta(dy)=\int_A\prs{\int_\Scal\pi(dx)p(x,y)}\eta(dy).
    \]
    Hence $\pi\ll\eta$ with density
    \[
    \pi(y)=\int_\Scal\pi(dx)p(x,y)\leq\sup_{x,y\in\Scal}p(x,y)<\infty.
    \]
    If $y_n\rightarrow y$, then by the continuity of $p(x,\cdot)$ and the bounded convergence theorem,
    \[
    \pi(y_n)=\int_\Scal\pi(dx)p(x,y_n)\rightarrow\int_\Scal\pi(dx)p(x,y)=\pi(y),
    \]
    as $n\gti.$
\end{proof}
When $\pi$ is bounded and continuous and $\Scal$ is compact, we may use a neural network $\{\pi_\theta:\theta\in\Theta\}$ to approximate $\pi$, by minimizing the integrated squared residual of the global balance equation
\[
l(\theta)=\int_\Scal\prs{\pi_\theta(y)-\int_\Scal \pi_\theta(x)\eta(dx)p(x,y)}^2\nu(dy).
\]
where $\nu$ is a probability measure. 
Note that the minimizer of $l(\theta)$ is $\pi$ up to a multiplicative constant. To make the learning goal unique, we can enforce that $\pi_\theta$ takes a prescribed value (e.g., 1) at a specific location (e.g., 0), which can be done by training
\[
\tilde{\pi}_\theta(x)=\pi_\theta(x)-\pi_\theta(0)+1
\]
instead of $\pi_\theta(x)$.
Let $Y_0\sim\nu$ and $Z_1,Z_{-1}\stackrel{\text{iid}}{\sim}\eta$. Then
\[
l(\theta)=\E\sbk{\sbk{\pi_\theta(Y_0)-\pi_\theta(Z_1)p(Z_1,Y_0)}\sbk{\pi_\theta(Y_0)-\pi_\theta(Z_{-1})p(Z_{-1},Y_0)}},
\]
which can be minimized via SGD (Algorithm \ref{den_alg}) as before. Note that this approach is useful when the transition density is known. We note that $Z_1,Z_{-1}$ are sampled from the reference measure $\eta$, in contrast to the settings of the previous sections in which $X_1,X_{-1}$ are sampled from $P(X_0,\cdot)$.
\begin{algorithm}
\caption{Density RGA}\label{den_alg}
\begin{algorithmic}
\Require probability measure $\nu$, neural network $\{\pi_\theta:\theta\in\Theta\}$, initialization $\theta_0$, step size $\alpha$, number of iterations $\bar{T}$
\For{$t\in\{0,...,\bar{T}-1\}$}
\State sample $Y_0\sim\nu$ and $Z_1,Z_{-1}\stackrel{\text{iid}}{\sim}\eta$
\State $\hat{l}'(\theta_t)=2\sbk{\pi_{\theta_t}(Y_0)-\pi_{\theta_t}(Z_1)p(Z_1,Y_0)}\sbk{\pi'_{\theta_t}(Y_0)-\pi'_{\theta_t}(Z_{-1})p(Z_{-1},Y_0)}$
\State $\theta_{t+1}=\theta_t-\alpha\hat{l'}(\theta_t)$
\EndFor
\State return $\pi_{\theta_{\bar{T}}}$
\end{algorithmic}
\end{algorithm}
\section{Dealing with non-compact state spaces}
\label{noncompact_section}
As discussed earlier, an important application of our deep learning methodology is to the computation of Lyapunov functions. Such functions arise naturally to the analysis of stability questions for Markov chains taking values in a non-compact state space. Clearly, we can not expect an algorithm that terminates in finite time to globally approximate a function with an unbounded domain (because to encode the approximation then consumes infinite memory). Even the issue of whether the algorithm can faithfully approximate the solution over a compact set $K$ is not immediately obvious, because the solution on the compact set is influenced by the behavior of the Markov chain over the entire infinite state space. However, \eqref{generalized_ustar_eqn} and \eqref{tilde_eqn} make clear how this can be done. In particular, if we choose $\tau(x)=\inf\{n\geq1:X_n\in K\}$, \eqref{generalized_ustar_eqn} and \eqref{tilde_eqn} yield an algorithm that approximates $u^*$ over $K$ (Algorithm \ref{tau_alg}).
\begin{algorithm}
\caption{Non-compact FTA-RGA}\label{tau_alg}
\begin{algorithmic}
\Require probability measure $\nu$, neural network $\{u_\theta:\theta\in\Theta\}$, initialization $\theta_0$, step size $\alpha$, number of iterations $\bar{T}$
\For{$t\in\{0,...,\bar{T}-1\}$}
\State sample $X_0\sim\nu$
\State sample i.i.d. $(X_1,...,X_\tau)$ and $(X_{-1},...,X_{-\tau})$
\State compute $\hat{l}'(\theta_t)$ as
\begin{align*}
    2&\sbk{u_{\theta_t}(X_0)-V_1(X_0)-W_1(X_0)u_{\theta_t}(X_{1,\tau_1(X_0)})}\\
    \cdot&\sbk{u'_{\theta_t}(X_0)-W_{-1}(X_0)u'_{\theta_t}(X_{-1,\tau_{-1}(X_0)})}
\end{align*}
\State $\theta_{t+1}=\theta_t-\alpha\hat{l'}(\theta_t)$
\EndFor
\State return $u_{\theta_{\bar{T}}}$
\end{algorithmic}
\end{algorithm}
\section{Numerical illustrations}
\label{numerical_section}
\subsection{Stochastic fluid networks}
To illustrate the effectiveness of Algorithm \ref{rga_alg}, we apply it to a simple stochastic fluid network, the state space of which is compact.
We consider a network with two stations, each processing fluid workload at rate $r_1=r_2=1$. A fraction $p_{12}=0.4$ of the fluid processed at station 1 is routed to station 2. The arrival of work follows a compound renewal process: after each interarrival time $T\sim U[0,2]$, work $Z_1\sim U[0,1]$ and $Z_2\sim U[0,1.2]$ arrive at stations 1 and 2, respectively. This stochastic fluid network is stable as
\[
\E Z_1<r_1\E T,\;\;p_{12}\E Z_1+\E Z_2<r_2\E T.
\]
Let $X_n$ be the remaining workload vector immediately after the $n$-th arrival. To make the state space of this Markov chain compact, we impose a finite buffer capacity $c=5$ at each station (overflow is discarded), so $X_n\in[0,c]^2$.

We aim to compute $u^*(x)=\E_x T_A$ where $A=[0,1]^2$. This is a Lyapunov function that satisfies \eqref{lyapunov_eqn} with equality, i.e., $Pu^*=u^*-1$ on $A^c$. Establishing $Pu\leq u-1$ on $A^c$ analytically is nontrivial here because $A$ is not large enough to ignore the strong boundary reflection near the origin. Consequently, simple candidates such as $u(x)=x_1+x_2$ do not exhibit negative drift near $A$. In contrast, our approach directly computes the ``minimal'' Lyapunov function, which is the expected hitting time itself. 

To solve \eqref{FTA_eqn} with $\beta\equiv0$ and $r\equiv1$, we train a single-layer neural network \eqref{nn_eqn} with width $m=1000$ and sigmoid activation $\sigma(z)=1/(1+e^{-z})$. We run $\bar{T}=10^6$ SGD steps with step size $\alpha_t\equiv10^{-3}$. At each step, we sample $X_0$ uniformly from $A^c=[0,5]^2\setminus[0,1]^2$, simulate two i.i.d. first transitions $X_1,X_{-1}\stackrel{\text{iid}}{\sim}P(X_0,\cdot)$, compute the gradient estimator, and update the network parameters using Adam \citep{kingma2014adam}, a variant of SGD.
All computations are implemented in \texttt{PyTorch} \citep{paszke2019pytorch}.
To visualize the result, we evaluate the trained neural network $u_{\theta_{\bar{T}}}$ on a mesh grid in $A^c$ with spacing 0.1 (left plot of Figure \ref{figure_sfn}). To validate the result, we independently estimate $\E_x T_A$ for each $x$ on the same grid using 1000 i.i.d. simulation runs (right plot of Figure \ref{figure_sfn}).
\begin{figure}[ht]
    \centering
    \begin{minipage}{0.49\textwidth}
        \centering
        \includegraphics[width=0.9\linewidth]{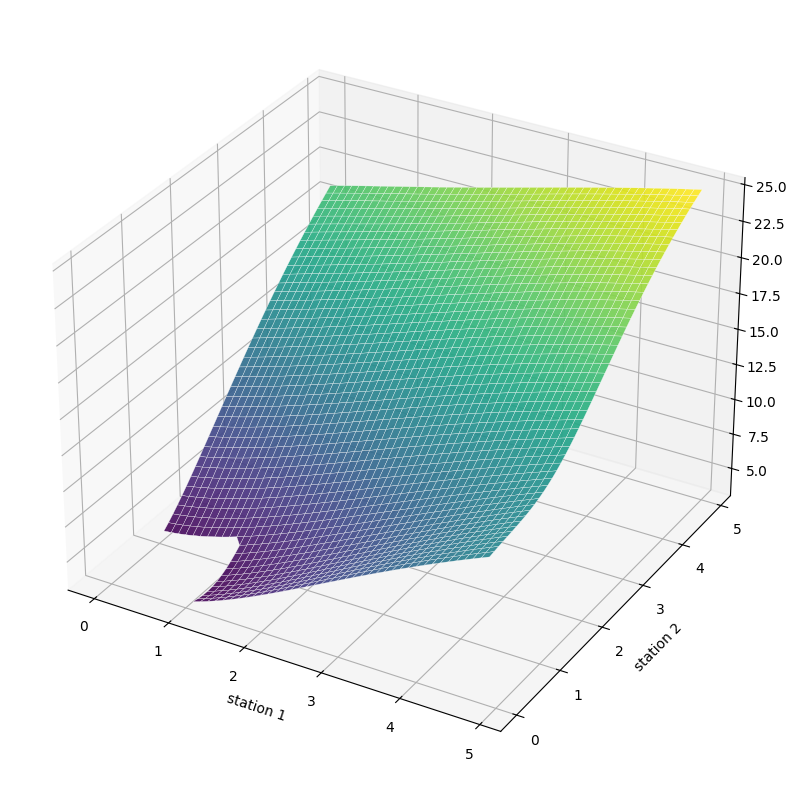}
    \end{minipage}
    \hfill
    \begin{minipage}{0.49\textwidth}
        \centering
        \includegraphics[width=0.9\linewidth]{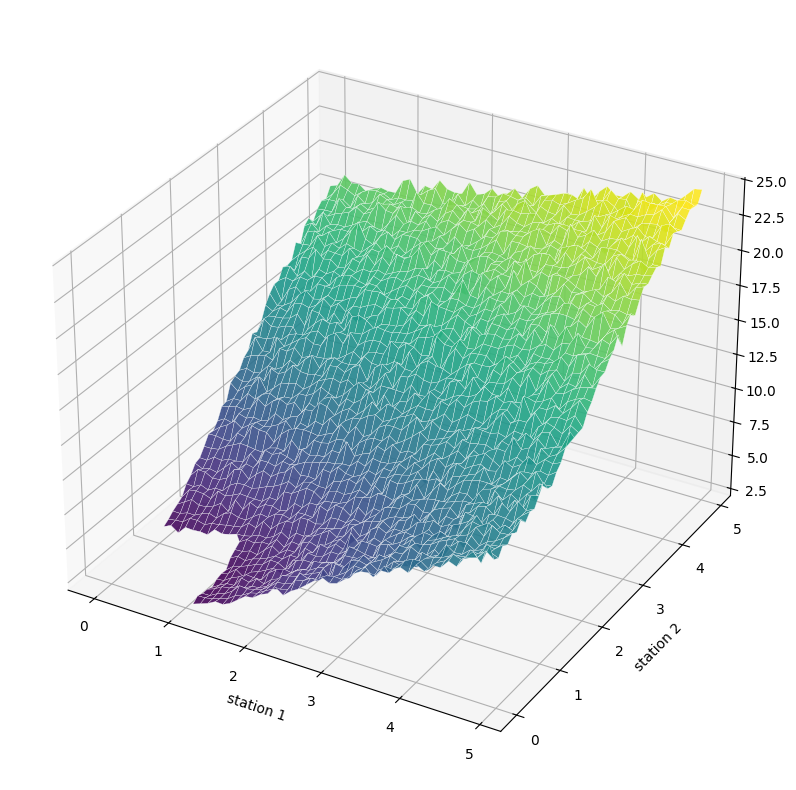}
    \end{minipage}
    \caption{Left: The learned solution of $Pu=u-1$. Right: The estimated $u^*(x)=\E_x T_A$ on a grid. 
    }
    \label{figure_sfn}
\end{figure}
\subsection{Kiefer-Wolfowitz workload vector}
As an example on a non-compact state space, we consider a G/G/2 queue. It is a single waiting line served by two parallel servers, with i.i.d. $U[0,2/0.6]$ interarrival times (arrival rate 0.6) and i.i.d. $U[0,2/0.5]$ service times (service rate 0.5). 
Customers are served in the order in which they arrive, by the first server to become available.
The Kiefer-Wolfowitz workload vector $W=(W^{\min},W^{\max})$ \citep{kiefer1956characteristics} records the remaining workload at each server immediately after each arrival, ordered from smallest to largest, providing a Markovian description of the system. To be specific,
\begin{align*}
W^{\min}_{n+1}=&\min((W^{\min}_n-A_{n+1})_++S_{n+1},(W^{\max}_n-A_{n+1})_+),\\
W^{\max}_{n+1}=&\max((W^{\min}_n-A_{n+1})_++S_{n+1},(W^{\max}_n-A_{n+1})_+),
\end{align*}
where $A_{n+1}$ is the interarrival time while $S_{n+1}$ is the service time. 

Since the state space $\Scal=\{(x_1,x_2):0\leq x_1\leq x_2\}$ is non-compact, we use Algorithm \ref{tau_alg} to compute $u^*(x)=\E_x T_A$ on $K$ where $A=\{(x_1,x_2):0\leq x_1\leq x_2\leq 3\}$ and $K=\{(x_1,x_2):3\leq x_1\leq x_2\leq 9\}$. 
Again, we train a single-layer neural network \eqref{nn_eqn} with width $m=1000$ and sigmoid activation $\sigma(z)=1/(1+e^{-z})$. We run $\bar{T}=10^6$ SGD steps with step size $\alpha_t\equiv10^{-3}$. At each step, instead of simulating two i.i.d. first transitions, we simulate two i.i.d. sample paths $X_1,...,X_\tau$ and $X_{-1},...,X_{-\tau}$ returning to $K$.
To visualize the result, we evaluate the trained neural network $u_{\theta_{\bar{T}}}$ on a mesh grid with spacing 0.2 in $K$ (left plot of Figure \ref{figure_kwv}). To validate the result, we independently estimate $\E_x T_A$ for each $x$ on the same grid using 10000 i.i.d. simulation runs (right plot of Figure \ref{figure_kwv}). From Figure \ref{figure_kwv}, we observe that the ``minimal'' Lyapunov function $u^*(x_1,x_2)$ remains nearly flat when $x_1$ is small and then smoothly transitions to linear growth as $x_1$ increases. This may explain why quadratic-then-linear (Huberized) Lyapunov functions work well in the stability analysis of queueing systems; see, e.g., \cite{blanchet2020rates}.
\begin{figure}[ht]
    \centering
    \begin{minipage}{0.49\textwidth}
        \centering
        \includegraphics[width=0.9\linewidth]{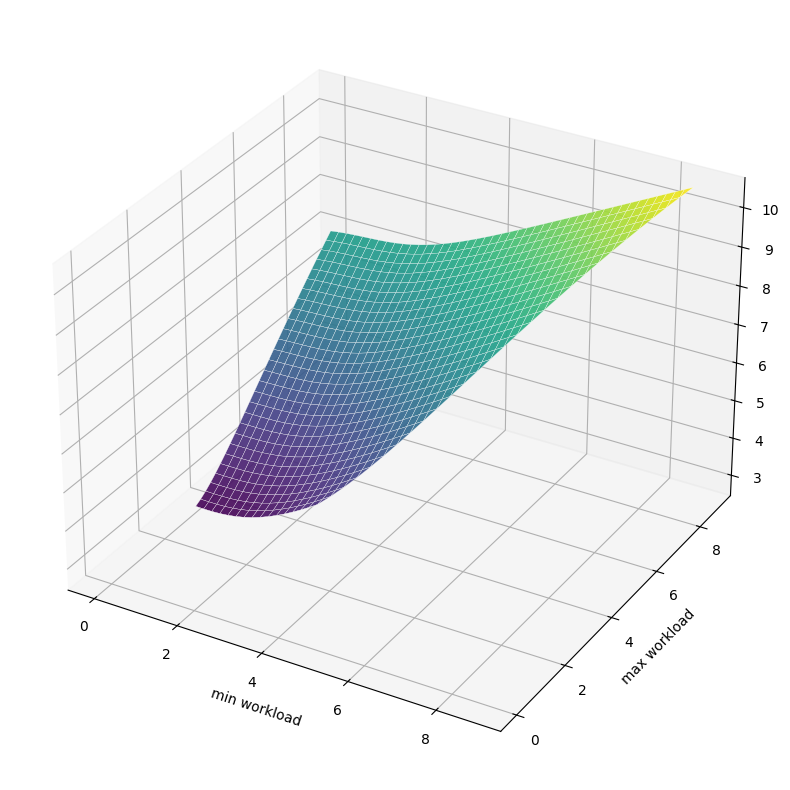}
    \end{minipage}
    \hfill
    \begin{minipage}{0.49\textwidth}
        \centering
        \includegraphics[width=0.9\linewidth]{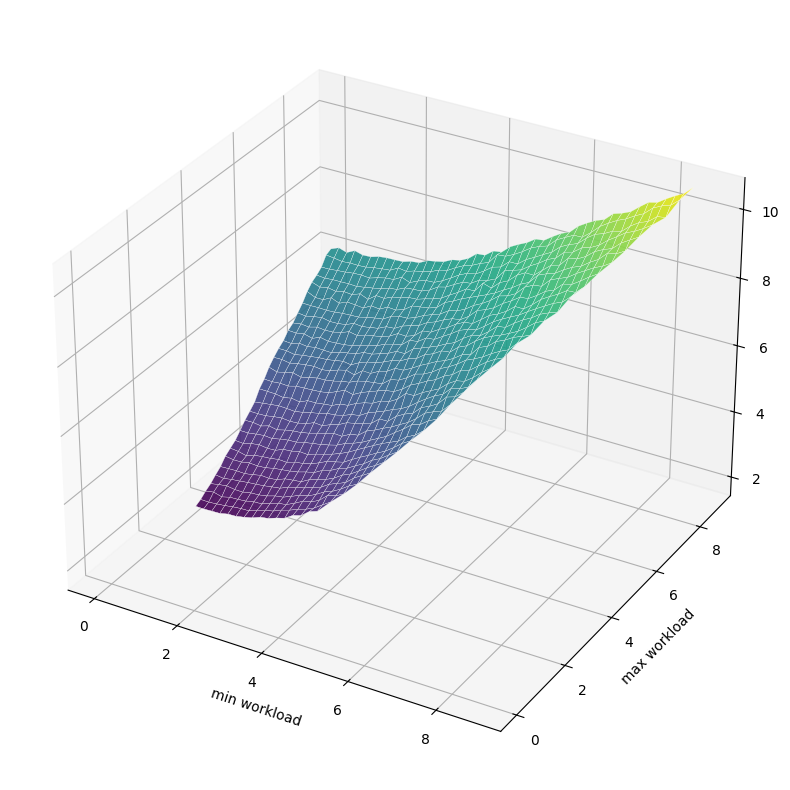}
    \end{minipage}
    \caption{Left: The learned solution of $Pu=u-1$. Right: The estimated $u^*(x)=\E_x T_A$ on a grid.
    }
    \label{figure_kwv}
\end{figure}

\subsection{Autoregressive processes}
To illustrate that deep learning can indeed be used to {\it correctly} solve Poisson's equation (Algorithm \ref{poi_alg}), we apply it to an autoregressive process, for which the solution (ground truth) can be analytically computed. Specifically, we consider the autoregressive Markov chain
\[
X_{n+1}=(X_n+Z_{n+1})/2,\;\;Z_{n+1}\stackrel{\text{iid}}{\sim}\mathrm{Ber}(1/2).
\]
This specific autoregressive process is known as the {\it Bernoulli convolution}; see \cite{erdos1939family}.
For $r(x)=x$, Poisson's equation \eqref{poi_eqn} is solved (up to an additive constant) by $u^*(x)=2x$.
For $r(x)=x^2$, Poisson's equation \eqref{poi_eqn} is solved (up to an additive constant) by $u^*(x)=(4/3)x^2+(2/3)x$. The verification of these solutions is straightforward and thus omitted.

We run Algorithm \ref{poi_alg} to solve Poisson's equation.
Since the state space is $[0,1]$, we train smaller neural networks with width $m=200$ using SGD with more aggressive step size $\alpha_t\equiv10^{-2}$.
The learned solutions, shown in Figure \ref{figure_poi}, match the corresponding true solutions.

\begin{figure}[ht]
    \centering
    \begin{minipage}{0.49\textwidth}
        \centering
        \includegraphics[width=0.9\linewidth]{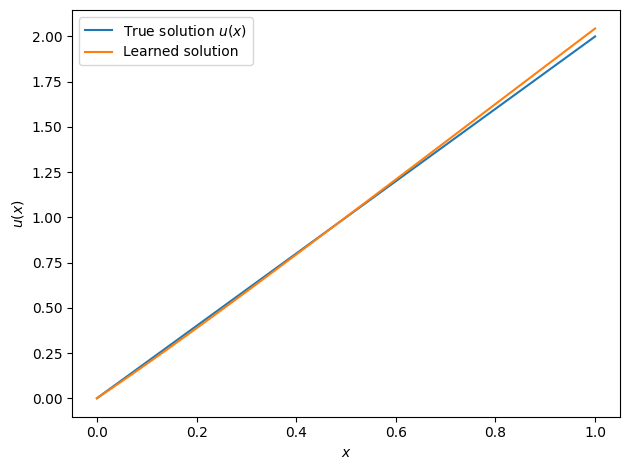}
    \end{minipage}
    \hfill
    \begin{minipage}{0.49\textwidth}
        \centering
        \includegraphics[width=0.9\linewidth]{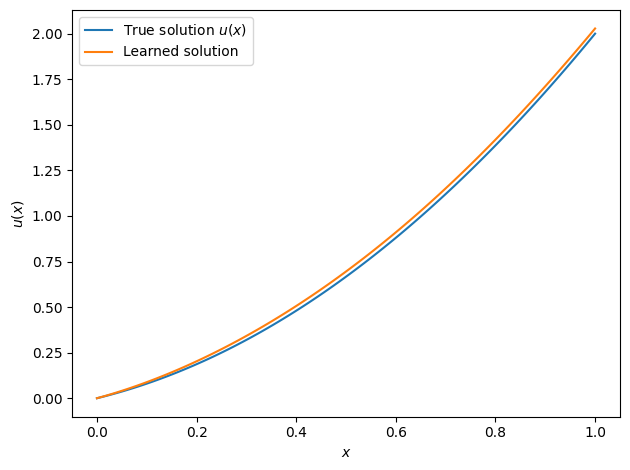}
    \end{minipage}
    \caption{The learned solutions of $u-Pu=r-\pi r$ with $r(x)=x$ (left) and $r(x)=x^2$ (right).
    }
    \label{figure_poi}
\end{figure}

\subsection{Markov chain Monte Carlo}
To illustrate that deep learning can indeed be used to {\it correctly} estimate stationary distributions (Algorithm \ref{den_alg}), we apply it to Markov chain Monte Carlo, for which the stationary distribution (ground truth) can be fully prescribed. Specifically, we consider the target density (unnormalized)
\[
\pi(x_1,x_2)\propto(2-x_1^2)(2-x_2^2)(2+x_1x_2),\;\;x_1,x_2\in[-1,1].
\]
We use a fixed-scan Gibbs sampler \citep{geman1984stochastic}: first update $X_1\sim\pi(\cdot|X_2)$ given the current $X_2$, then update $X_2\sim\pi(\cdot|X_1)$ given the new $X_1$, yielding a Markov chain that converges to $\pi$. The transition density is
\[
p(x,y)=\pi(y_1|x_2)\pi(y_2|y_1),\;\;x,y\in [-1,1]^2,
\]
where
\[
\pi(y_1|x_2)=(3/20)(2-y_1^2)(2+y_1x_2),\;\;\pi(y_2|y_1)=(3/20)(2-y_2^2)(2+y_1y_2).
\]
We run Algorithm \ref{den_alg} to estimate the stationary distribution of the Gibbs sampler. 
As mentioned in Section \ref{stationary_section}, implementing the Gibbs sampler is not needed to run Algorithm \ref{den_alg}, as it only requires the transition density.
In Figure \ref{figure_den}, we plot the learned stationary distribution on the left and the ground truth $\pi$ on the right.
\begin{figure}[ht]
    \centering
    \begin{minipage}{0.49\textwidth}
        \centering
        \includegraphics[width=0.9\linewidth]{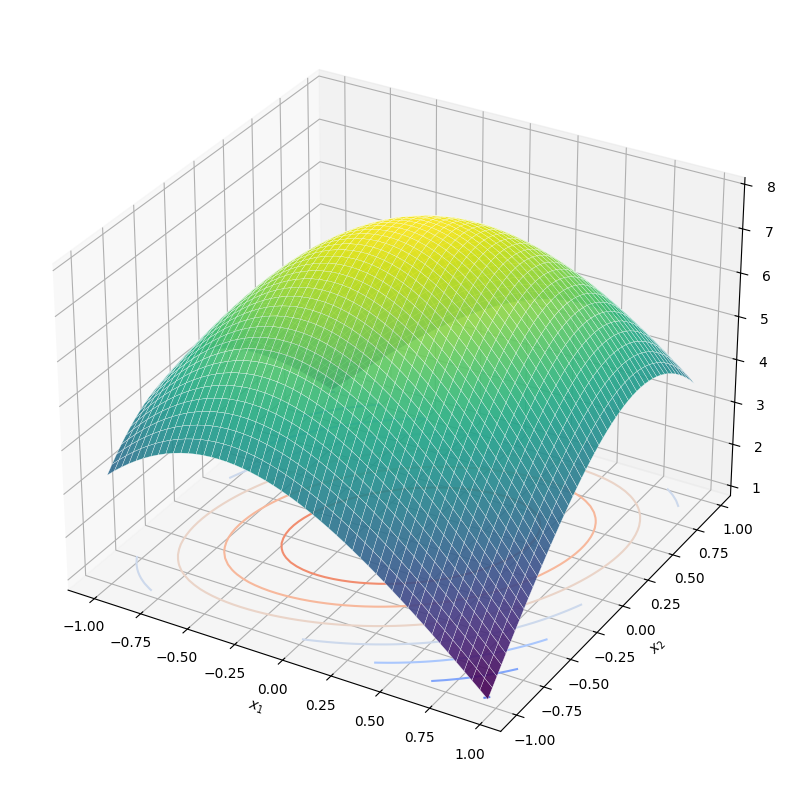}
    \end{minipage}
    \hfill
    \begin{minipage}{0.49\textwidth}
        \centering
        \includegraphics[width=0.9\linewidth]{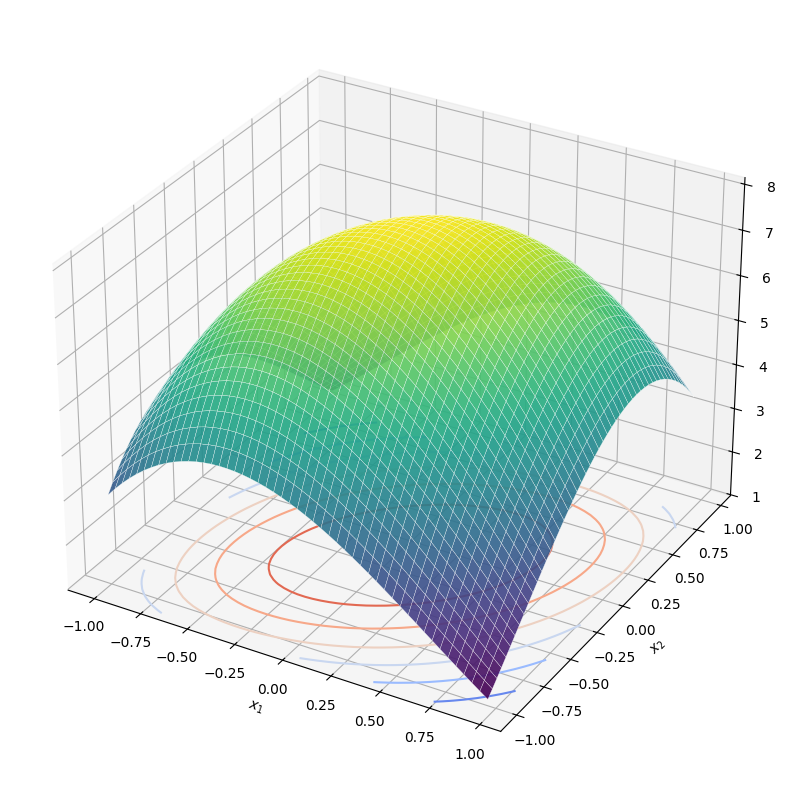}
    \end{minipage}
    \caption{Left: The learned stationary distribution of the Gibbs sampler. Right: The ground truth.
    }
    \label{figure_den}
\end{figure}
\section{Conclusions}
This paper demonstrates the potential of combining deep learning and simulation to study Markov chains. Under the unified FTA-RGA (first-transition analysis \& residual gradient algorithm) framework, we can construct Lyapunov functions, solve Poisson's equation, and estimate stationary distributions, for Markov chains on compact or non-compact state spaces. As we discussed earlier in our sample complexity analysis, we have left some considerations for future research, in particular, involving the role of the parameter $\Lambda$ which is connected with the choice of the sampling density and the number of steps required to implement the residual network. We expect that these issues will manifest themselves in the high dimensional setting prominently.

\section{Proof of Theorem \ref{thm:minimax:main}}
\label{subsec:minimax:proof}

\providecommand{\Rad}{\mathfrak{R}}
\providecommand{\Pdim}{\operatorname{Pdim}}

The proof follows the localization program of Bartlett–Bousquet–Mendelson (BBM)~\cite{bartlett2005local,koltchinskii2006local}.  We first record definitions and basic tools, then prove each proposition in turn, and finally assemble the oracle inequality.

\subsection{Preliminaries: definitions and tools}
\paragraph{Pseudo–dimension.}
For a class \(\mathcal{G}\) of real–valued functions on a set \(\mathcal{Z}\), the \emph{pseudo–dimension} \(\Pdim(\mathcal{G})\) is the largest integer \(m\) such that there exist points \(z_1,\ldots,z_m\in\mathcal{Z}\) and thresholds \(s_1,\ldots,s_m\in\mathbb{R}\) with the following property: for every labeling \(\sigma\in\{-1,+1\}^m\) there exists \(g\in\mathcal{G}\) with \(\operatorname{sign}(g(z_i)-s_i)=\sigma_i\) for all \(i\).  Pseudo–dimension reduces to VC–dimension for \(\{0,1\}\)–valued classes and controls combinatorial complexity for real–valued classes; see, e.g., \cite[Section 3]{bartlett2019nearly} and \cite{Langer2021}.

\paragraph{Empirical and Rademacher complexity.}
Given i.i.d.\ data \(Z_1,\ldots,Z_n\) with law \(P\) and a class \(\mathcal{G}\subset L^2(P)\), the \emph{empirical Rademacher complexity} is
\begin{equation*}
\Rad_n(\mathcal{G}) := \mathbb{E}\Bigg[\sup_{g\in\mathcal{G}} \frac{1}{n}\sum_{i=1}^n \varepsilon_i\, g(Z_i)\,\Big|\,Z_1,...,Z_n\Bigg],
\end{equation*}
where \(\varepsilon_1,\ldots,\varepsilon_n\) are i.i.d.\ Rademacher signs (i.e., uniform on $\{1,-1\}$) independent of the sample.  Symmetrization and contraction principles (see \cite[Sec.\ 2--3]{bartlett2005local}) convert population–empirical deviations into Rademacher averages and allow us to bound the latter via combinatorial parameters such as \(\Pdim\).

\paragraph{Local Rademacher complexity and the BBM fixed point.}
Let \(g_u:=h_u-h_{u^*}\) denote the \emph{excess loss}.  For \(r>0\), define the localized class
\begin{equation*}
\mathcal{G}(r):=\big\{ g_u : u\in\mathcal{F}_{W,L,B},\;\mathbb{E}[g_u]\le r\big\}.
\end{equation*}
Set \(\psi(r):=\mathbb{E}\,\Rad_n(\mathcal{G}(r))\).  The BBM theory shows that, under a Bernstein–type variance control (next paragraph), the estimation error is governed by the smallest \(\gamma^\star>0\) (the \emph{local fixed point}) solving \(\psi(\gamma^\star)\le c\,\gamma^\star\) for an absolute constant \(c>0\); see \cite[Theorem 3.3 \& 3.5]{bartlett2005local} and \cite[Section 3]{koltchinskii2006local}.  This yields a high–probability ``oracle inequality'' with \(\gamma^\star\) as the dominant complexity term.

\paragraph{Bernstein (variance–excess–risk) condition.}
A family \(\{g_u\}\) satisfies a (linear) Bernstein condition with constant \(C_{\rm B}\) if
\(
\operatorname{Var}(g_u)\le C_{\rm B}\,\mathbb{E}[g_u]
\)
for all \(u\) under consideration.  For square–loss–type problems, this is standard (the curvature of the loss forces variance to be controlled by excess risk).  In our product–loss setting, we verify such a condition directly below; see \cite{bartlett2005local,koltchinskii2006local}.

\subsection{Detailed proofs of the propositions}

\begin{proposition}[Coercivity]\label{prop:minimax:coercive}
Under Assumption~\ref{assump:minimax:dom}, \[\|\tilde{H}\|_{L^2(\nu)} \le \sup_{x\in K}\tilde{H}(x,K)^{1/2}\cdot\sqrt{\Lambda}.\]  Consequently, for all \(w\in L^2(\nu)\),
\begin{equation}\label{eq:minimax:equiv}
\kappa\|w\|_2 \le \|(I-\tilde{H})w\|_2 \le (1+\|\tilde{H}\|_{L^2(\nu)})\|w\|_2 \le 2\|w\|_2,
\end{equation}
with \(\kappa\) as in~\eqref{eq:minimax:kappa}.
\end{proposition}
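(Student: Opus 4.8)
The plan is to derive the operator-norm bound by a pointwise Cauchy--Schwarz estimate in the ``inner'' integration variable, and then to transfer the resulting mass bound over to the sampling measure $\nu$ using the killed-kernel domination in Assumption~\ref{assump:minimax:dom}(iv); the two-sided estimate~\eqref{eq:minimax:equiv} is then a one-line consequence of the triangle inequality, once the operator-norm bound just derived is combined with the condition $\kappa\in(0,1)$ recorded in~\eqref{eq:minimax:kappa} (equivalently, $\sup_{x\in K}\tilde H(x,K)^{1/2}\sqrt\Lambda<1$). Throughout I write $M:=\sup_{x\in K}\tilde H(x,K)$ for the finite quantity appearing on the right-hand side of the bound.

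For the operator-norm bound, I would fix $w\in L^2(\nu)$ and note that, for $\nu$-a.e.\ $x$, $\tilde H(x,\cdot)$ is a finite nonnegative measure of total mass $\tilde H(x,K)\le M$, so Cauchy--Schwarz against the constant function $1$ gives $\bigl((\tilde H w)(x)\bigr)^2\le \tilde H(x,K)\int_K w(y)^2\,\tilde H(x,dy)\le M\int_K w(y)^2\,\tilde H(x,dy)$. Integrating in $x$ against $\nu$ and applying Tonelli's theorem (the integrand is nonnegative and $\tilde H$ is a kernel) yields $\|\tilde H w\|_2^2\le M\int_K w(y)^2\,(\nu\tilde H)(dy)$, where $(\nu\tilde H)(A)=\int_K\tilde H(x,A)\,\nu(dx)$. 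Assumption~\ref{assump:minimax:dom}(iv) says precisely that $\nu\tilde H\ll\nu$ with Radon--Nikodym derivative at most $\Lambda$, hence $\int_K w(y)^2\,(\nu\tilde H)(dy)\le\Lambda\|w\|_2^2$; combining and taking the supremum over $\|w\|_2=1$ gives $\|\tilde H\|_{L^2(\nu)}\le M^{1/2}\sqrt\Lambda=\sup_{x\in K}\tilde H(x,K)^{1/2}\sqrt\Lambda$. A convenient by-product is that $\int_K w(y)^2\,\tilde H(x,dy)<\infty$ for $\nu$-a.e.\ $x$, so $\tilde H w\in L^2(\nu)$ is well defined, which is all that is needed.

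For~\eqref{eq:minimax:equiv}, since the first part gives $\|\tilde H\|_{L^2(\nu)}\le\sup_{x\in K}\tilde H(x,K)^{1/2}\sqrt\Lambda=1-\kappa<1$, the forward and reverse triangle inequalities applied to $(I-\tilde H)w$ immediately yield $\kappa\|w\|_2\le(1-\|\tilde H\|_{L^2(\nu)})\|w\|_2\le\|(I-\tilde H)w\|_2\le(1+\|\tilde H\|_{L^2(\nu)})\|w\|_2\le 2\|w\|_2$. In particular $I-\tilde H$ is boundedly invertible on $L^2(\nu)$, and applying this display to $w=u-u^*$ shows that the fixed-point residual $\|(I-\tilde H)(u-u^*)\|_2$ is equivalent to $\|u-u^*\|_2$ with constants governed by $\kappa$ --- the form in which the proposition is used in~\eqref{eq:minimax:risk-residual} and the subsequent oracle inequality. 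I do not anticipate any genuine obstacle; the only care required is routine measure-theoretic bookkeeping --- measurability of $x\mapsto\tilde H(x,A)$ (automatic for a kernel), a.e.\ finiteness of $\tilde H(x,K)$ and of the inner integral (both following from Assumption~\ref{assump:minimax:dom}(iv)), and finiteness of $M$, which in the FTA setting holds automatically because $W(x)\le 1$ whenever $\beta\ge 0$, so that $\tilde H(x,K)=\E_x[W(x)]\le 1$.
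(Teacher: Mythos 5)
Your proposal is correct and follows essentially the same route as the paper's proof: pointwise Cauchy--Schwarz against $\tilde H(x,\cdot)$ to get $(\tilde H w)^2(x)\le \tilde H(x,K)(\tilde H w^2)(x)$, then integration against $\nu$ together with the domination $\nu\tilde H\le\Lambda\nu$ (item (iv), which the paper's proof mislabels as (iii)), and finally the forward and reverse triangle inequalities for~\eqref{eq:minimax:equiv}. Your added measure-theoretic bookkeeping is harmless and consistent with the paper's argument.
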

\begin{proof}[Proof]
Note that $\tilde{H}$ is a substochastic kernel, so the Cauchy-Schwarz inequality implies that for any $f\in L^2(\nu)$,
\[
(\tilde{H}f)^2(x)\leq(\tilde{H}f^2)(x)\tilde{H}(x,K)
\]
for $x\in K$. So, part (iii) of Assumption \ref{assump:minimax:dom} implies that 
\[
\norm{\tilde{H}f}_2^2\leq\int_K\nu(dx)(\tilde{H}f^2)(x)\tilde{H}(x,K)\leq\Lambda\int_K\nu(dy)f^2(y)\cdot\sup_{x\in K}\tilde{H}(x,K)
\]
and hence \[\|\tilde{H}\|_{L^2(\nu)} \le \sup_{x\in K}\tilde{H}(x,K)^{1/2}\cdot\sqrt{\Lambda}.\]
The two–sided bound \eqref{eq:minimax:equiv} follows from \(\|(I-\tilde{H})w\|_2\ge (1-\|\tilde{H}\|_{L^2(\nu)})\|w\|_2=\kappa \|w\|_2\) and \(\|(I-\tilde{H})w\|_2\le \|w\|_2+\|H\|_{L^2(\nu)}\|w\|_2\le 2\|w\|_2\).
\end{proof}

\begin{proposition}[Bernstein / variance–excess–risk control]\label{prop:minimax:bernstein}
There exists a constant \(C_{\rm B}<\infty\) (depending only on \(B\), \(\sigma_V^2\), \(C_W\), \(\Lambda\), and \(\kappa\)) such that, for all \(u\in\mathcal{F}_{W,L,B}\),
\begin{equation*}
\operatorname{Var}\!\left(h_u(R,V,W,Y,\tilde V,\tilde W,\tilde Y)-h_{u^*}(R,V,W,\tilde V,\tilde W,\tilde Y)\right) \le C_{\rm B}\,\mathcal{L}(u).
\end{equation*}
Moreover, \(\mathcal{L}(u)=\|(I-\tilde H)(u-u^*)\|_2^2\) and \(|h_u|\) has finite second moment under Assumption~\ref{assump:minimax:dom}.
\end{proposition}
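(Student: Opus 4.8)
The plan is to exploit two structural features of the product loss: the \emph{linearity} of the one-step residual $u\mapsto\ell_u$, and the \emph{conditional independence}, given $R$, of the two draws $(V,W,Y)$ and $(\tilde V,\tilde W,\tilde Y)$. Write $\phi:=u-u^*$; since $K$ is compact and $u^*\in C^s(K)$ (Assumption~\ref{assump:minimax:smooth}) we have $\|u^*\|_\infty<\infty$ and $\|\phi\|_\infty\le B+\|u^*\|_\infty=:B'$. Define the homogeneous residual $D(r,w,y):=\phi(r)-w\,\phi(y)$, so that $\ell_u=\ell_{u^*}+D$. Expanding the product loss and cancelling $h_{u^*}=\ell_{u^*}(R,V,W,Y)\,\ell_{u^*}(R,\tilde V,\tilde W,\tilde Y)$ gives, with the shorthand $\ell^\star,\tilde\ell^\star,D,\tilde D$ for the two residuals evaluated at the two draws,
\[
g_u=h_u-h_{u^*}=\ell^\star\tilde D+D\,\tilde\ell^\star+D\,\tilde D.
\]
Because $u^*$ solves~\eqref{eq:minimax:fta}, $\mathbb{E}[\ell^\star\mid R]=0$, while $\mathbb{E}[D\mid R]=\big((I-\tilde H)\phi\big)(R)=:m(R)$. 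Conditional independence then yields $\mathbb{E}[g_u\mid R]=m(R)^2$, which reconfirms $\mathcal{L}(u)=\mathbb{E}[m(R)^2]=\|(I-\tilde H)\phi\|_2^2$, and Proposition~\ref{prop:minimax:coercive} gives $\mathcal{L}(u)\ge\kappa^2\|\phi\|_2^2$. Hence it suffices to prove $\mathbb{E}[g_u^2]\le C'\|\phi\|_2^2$ for a constant $C'$ depending only on $B$, $\|u^*\|_\infty$, $\sigma_V^2$, $C_W$, $\Lambda$, $\kappa$; then $\operatorname{Var}(g_u)\le\mathbb{E}[g_u^2]\le (C'/\kappa^2)\,\mathcal{L}(u)$, and we take $C_{\rm B}=C'/\kappa^2$.

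The second step bounds $\mathbb{E}[g_u^2]\le 3\big(\mathbb{E}[(\ell^\star\tilde D)^2]+\mathbb{E}[(D\tilde\ell^\star)^2]+\mathbb{E}[(D\tilde D)^2]\big)$ and estimates each term by the tower rule plus two conversions. First, Cauchy--Schwarz and Assumption~\ref{assump:minimax:dom}(v) turn second moments of the random weight into $\tilde H$: $\mathbb{E}[W^2\phi(Y)^2\mid R=r]\le C_W(\tilde H\phi^2)(r)$, so $\mathbb{E}[D^2\mid R=r]\le 2\phi(r)^2+2C_W(\tilde H\phi^2)(r)$, and similarly $\mathbb{E}[(\ell^\star)^2\mid R=r]$ is bounded by a finite constant $M_0$ (using $\|u^*\|_\infty$, $\sigma_V^2$, $C_W$, and $\sup_{x\in K}\tilde H(x,K)=(1-\kappa)^2/\Lambda<\infty$ from~\eqref{eq:minimax:kappa}). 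Second, Assumption~\ref{assump:minimax:dom}(iv) controls the pushforward: $\nu(\tilde H\phi^2)=(\nu\tilde H)\phi^2\le\Lambda\,\nu\phi^2=\Lambda\|\phi\|_2^2$. For the mixed term, $\mathbb{E}[(\ell^\star\tilde D)^2]=\mathbb{E}\!\big[\mathbb{E}[(\ell^\star)^2\mid R]\,\mathbb{E}[\tilde D^2\mid R]\big]\le M_0\!\int_K(2\phi^2+2C_W\tilde H\phi^2)\,d\nu\le 2M_0(1+C_W\Lambda)\|\phi\|_2^2$, and $\mathbb{E}[(D\tilde\ell^\star)^2]$ is symmetric. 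For the quartic term I would bound \emph{one} factor of $\mathbb{E}[(D\tilde D)^2]=\int_K(\mathbb{E}[D^2\mid R=r])^2\nu(dr)$ uniformly --- $\mathbb{E}[D^2\mid R=r]\le 2B'^2(1+C_W\sup_{x\in K}\tilde H(x,K))=:M_1$ --- keeping the other in $L^2(\nu)$, so $\mathbb{E}[(D\tilde D)^2]\le M_1\,\mathbb{E}[D^2]\le 2M_1(1+C_W\Lambda)\|\phi\|_2^2$. Summing gives $\mathbb{E}[g_u^2]\le C'\|\phi\|_2^2$. The remaining assertions are immediate: $\mathcal{L}(u)=\|(I-\tilde H)(u-u^*)\|_2^2$ was shown above, and $\mathbb{E}[h_u^2]=\mathbb{E}\big[(\mathbb{E}[\ell_u^2\mid R])^2\big]<\infty$ since $\mathbb{E}[\ell_u^2\mid R=r]\le 3\big(B^2+\sigma_V^2+B^2C_W\sup_{x\in K}\tilde H(x,K)\big)$ is bounded.

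The step I expect to be the main obstacle is obtaining a \emph{linear}, not quadratic, dependence of the variance on the excess risk. The culprit is the quartic contribution $D\tilde D$: naively $\mathbb{E}[(D\tilde D)^2]$ is of order $\|\phi\|_\infty^2\|\phi\|_2^2$, which is linear in $\|\phi\|_2^2$ only after one factor is absorbed into a uniform bound --- and to do this cleanly one must track that $\sup_{x\in K}\tilde H(x,K)$ and $\|u^*\|_\infty$ are genuinely finite on the approximation set $K$ (the former is precisely where~\eqref{eq:minimax:kappa} and Assumption~\ref{assump:minimax:dom}(iv)--(v) pay off). The ``pointwise $\times\,L^2(\nu)$'' splitting above is the device that makes this rigorous while keeping every constant expressible through $B$, $\|u^*\|_\infty$, $\sigma_V^2$, $C_W$, $\Lambda$, and $\kappa$; combining it with the coercivity bound $\mathcal{L}(u)\ge\kappa^2\|\phi\|_2^2$ is what produces the stated form of $C_{\rm B}$.
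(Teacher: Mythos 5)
Your proof is correct and follows essentially the same route as the paper's: both arguments rest on the linearity of $\ell_u-\ell_{u^*}$ in $u-u^*$, conditional independence of the two draws given $R$, Assumption~\ref{assump:minimax:dom}(v) to convert $\mathbb{E}[W^2\phi(Y)^2\mid R]$ into $C_W(\tilde H\phi^2)(R)$, Assumption~\ref{assump:minimax:dom}(iv) to integrate that against $\nu$, uniform clipping bounds for the second moments of the residuals, and the coercivity of Proposition~\ref{prop:minimax:coercive} to return to $\mathcal{L}(u)$. The only differences are bookkeeping: you expand $g_u$ into three terms and handle the quartic piece $D\tilde D$ by a sup-times-$L^2(\nu)$ split, whereas the paper telescopes into two terms and absorbs that piece into the uniform bound on $\mathbb{E}[\ell_u^2\mid R]$; both yield a constant of the same form.
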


\begin{proof}[Proof (step–by–step)]
Write \(\varrho:=u-u^*\) and define
\[
m_{\varrho}(r):=\mathbb{E}[\ell_u(r,V,W,Y)-\ell_{u^*}(r,V,W,Y)\mid R=r],\qquad
s_{\varrho}(r):=\mathbb{E}[(\ell_u-\ell_{u^*})^2\mid R=r].
\]
Since \(u^*\) solves \eqref{eq:minimax:fta}, we have \(m_{\varrho}(r)=(I-\tilde H)\varrho(r)\), hence \(\mathcal{L}(u)=\mathbb{E}[m_{\varrho}(R)^2]\). Also,
\(
\ell_u-\ell_{u^*}=\varrho(R)-W\,\varrho(Y).
\)
Therefore,
\begin{align}
s_{\varrho}(R) &= m_{\varrho}(R)^2 + \operatorname{Var}(\varrho(R)-W\,\varrho(Y)\mid R) \\
&\le m_{\varrho}(R)^2 + \mathbb{E}[W^2\,\varrho(Y)^2\mid R].
\end{align}
By Assumption~\ref{assump:minimax:dom}(v), \(\mathbb{E}[W^2\,\varrho(Y)^2\mid R]\le C_W\,(\tilde H\,\varrho^2)(R)\). Integrating and using Assumption~\ref{assump:minimax:dom}(iv), we obtain
\[
\mathbb{E}[s_{\varrho}(R)] \le \mathbb{E}[m_{\varrho}(R)^2] + C_W\,\Lambda\,\|\varrho\|_2^2.
\]
By Proposition~\ref{prop:minimax:coercive}, \(\|\varrho\|_2\le \kappa^{-1}\,\|(I-\tilde H)\varrho\|_2=\kappa^{-1}\,\|m_{\varrho}\|_2\). Hence
\begin{equation}\label{eq:bernstein-second-moment-final}
\mathbb{E}[s_{\varrho}(R)] \le \Big(1 + C_W\,\Lambda\,\kappa^{-2}\Big)\,\mathcal{L}(u).
\end{equation}
Next, decompose the excess loss \(g_u:=h_u-h_{u^*}\) as
\(
g_u=(\ell_u-\ell_{u^*})\,\ell_u + \ell_{u^*}\,(\ell_u-\ell_{u^*}).
\)
Conditioning on \(R\) and using independence of \((V,W,Y)\) and \((\tilde V,\tilde W,\tilde Y)\) given \(R\),
\[
\mathbb{E}[g_u^2\mid R] \le 2\,s_{\varrho}(R)\,\Big(\mathbb{E}[\ell_u^2\mid R] + \mathbb{E}[\ell_{u^*}^2\mid R]\Big).
\]
Since outputs are clipped to \([-B,B]\), write \((a+b+c)^2\le 3(a^2+b^2+c^2)\) and note that, for \(\nu\)-a.e. \(R\),
\[
\ell_u^2 \le 3\,\big(u(R)^2 + V^2 + W^2 u(Y)^2\big),\qquad \ell_{u^*}^2 \le 3\,\big(u^*(R)^2 + V^2 + W^2 u^*(Y)^2\big).
\]
Let
\[
H_K := \sup_{x\in K} \tilde H(x,K) = \sup_{x\in K} \mathbb{E}[W\mid R=x],
\]
so the supremum is taken over \(x\in K\). By Assumption~\ref{assump:minimax:dom}(iii) and (v), and clipping \(|u|\le B\),
\[
\mathbb{E}[\ell_u^2\mid R] \le 3\,\big(B^2 + \sigma_V^2 + B^2\,\mathbb{E}[W^2\mid R]\big) \le 3\,\big(B^2 + \sigma_V^2 + B^2\,C_W\,H_K\big).
\]
Moreover, since \(u^*\) is continuous and $K$ is compact we use its sup-norm on \(K\):
\[
\|u^*\|_\infty := \sup_{x\in K} |u^*(x)| < \infty,
\]
so similarly
\[
\mathbb{E}[\ell_{u^*}^2\mid R] \le 3\,\big(\|u^*\|_\infty^2 + \sigma_V^2 + C_W\,H_K\,\|u^*\|_\infty^2\big).
\]
Thus we may take
\[
C' := 3\,\Big[(B^2+\|u^*\|_\infty^2) + \sigma_V^2 + C_W\,H_K\,(B^2+\|u^*\|_\infty^2)\Big],
\]
which yields the uniform bound
\[
\mathbb{E}[\ell_u^2\mid R] + \mathbb{E}[\ell_{u^*}^2\mid R] \le C'.
\]
Taking expectations and applying \eqref{eq:bernstein-second-moment-final} yields
\(
\mathbb{E}[g_u^2] \le C_{\rm B}\,\mathcal{L}(u),
\)
with \(C_{\rm B}:=2\,C'\,(1 + C_W\,\Lambda\,\kappa^{-2})\). Finally, \(\operatorname{Var}(g_u)\le \mathbb{E}[g_u^2]\) and \(\mathcal{L}(u^*)=0\).
\end{proof}

\begin{proposition}[Localized complexity and oracle inequality]\label{prop:minimax:oracle}
Let \(\mathcal{H}_{W,L,B}:=\{h_u:u\in\mathcal{F}_{W,L,B}\}\) and \(g_u=h_u-h_{u^*}\).  Denote by \(\gamma^\star\) the BBM fixed point for the localized class \(\mathcal{G}(r)=\{g_u:\mathbb{E}[g_u]\le r\}\).  Then there exist universal constants \(c_1,c_2>0\) such that, for all \(\delta\in(0,1)\), with probability at least \(1-\delta\),
\begin{equation}\label{eq:minimax:oracle}
\mathcal{L}(\hat u_n)-\mathcal{L}(u^*) \le c_1\Big(\inf_{u\in\mathcal{F}_{W,L,B}}\big[\mathcal{L}(u)-\mathcal{L}(u^*)\big]+\gamma^\star\Big)+c_2\,\frac{\log(1/\delta)}{n}.
\end{equation}
\end{proposition}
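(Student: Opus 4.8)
The argument follows the BBM localization program~\cite{bartlett2005local,koltchinskii2006local} specialized to the product loss, with the ReLU capacity bound supplying the combinatorial complexity. Write $g_u:=h_u-h_{u^*}$. Since $u^*$ solves~\eqref{eq:minimax:fta}, $\mathcal{L}(u^*)=0$, so the excess risk coincides with the risk, $\mathcal{L}(u)-\mathcal{L}(u^*)=\mathcal{L}(u)=\E[g_u]$. Proposition~\ref{prop:minimax:bernstein} already supplies the two ingredients the program requires: a linear Bernstein condition $\operatorname{Var}(g_u)\le C_{\rm B}\,\mathcal{L}(u)$, and a second–moment bound $\E[g_u^2]\le C_{\rm B}\,\mathcal{L}(u)$ controlled by the excess risk itself. (The coercivity estimate~\eqref{eq:minimax:equiv} is not needed here; it is used only later, when the bound on $\mathcal{L}(\hat u_n)$ produced by this proposition is converted into the bound on $\norm{\hat u_n-u^*}_2^2$ claimed in Theorem~\ref{thm:minimax:main}.)

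First I would set up the localized excess–loss class $\mathcal{G}(r)=\{g_u:u\in\mathcal{F}_{W,L,B},\ \E[g_u]\le r\}$ and $\psi(r)=\E\,\Rad_n(\mathcal{G}(r))$, and invoke the sub–root machinery of BBM. After symmetrization one bounds the localized empirical process $\sup_{u}\bigl(\mathcal{L}(u)-\widehat{\mathcal{L}}_n(u)-\tfrac12\mathcal{L}(u)\bigr)$ by a constant multiple of the fixed point $\gamma^\star$ of a sub–root majorant of $r\mapsto\psi(r)$ plus lower–order deviation terms, on an event of probability at least $1-\delta$. Concretely I would use the peeling device: split $\mathcal{F}_{W,L,B}$ into shells $\{u:2^{j-1}\gamma^\star<\mathcal{L}(u)\le 2^{j}\gamma^\star\}$, apply Talagrand's concentration inequality to the supremum over each shell (the Bernstein bound makes the variance proxy on shell $j$ of order $C_{\rm B}2^{j}\gamma^\star$), and balance the Rademacher term against $\mathcal{L}(u)$ to close the recursion — this is exactly Theorems 3.3 and 3.5 of~\cite{bartlett2005local}, which I would cite rather than rederive. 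Writing $\bar u\in\arg\min_{u\in\mathcal{F}_{W,L,B}}\mathcal{L}(u)$ and using the ERM inequality $\widehat{\mathcal{L}}_n(\hat u_n)\le\widehat{\mathcal{L}}_n(\bar u)$ then produces the approximation term $\inf_{u\in\mathcal{F}_{W,L,B}}[\mathcal{L}(u)-\mathcal{L}(u^*)]$, and Talagrand's inequality applied once more at $\bar u$ contributes the $\log(1/\delta)/n$ term, giving~\eqref{eq:minimax:oracle} with $c_1,c_2$ depending only on $C_{\rm B}$, $B$, $\|u^*\|_\infty$ (absorbed into ``universal'' in the sense of being independent of $n,W,L,\delta$).

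Second I would estimate $\psi(r)$. By the contraction principle (after the truncation discussed below makes $u\mapsto g_u$ Lipschitz in the sup–norm with a polynomial–in–$n$ constant), $\Rad_n(\mathcal{G}(r))$ is controlled by the Rademacher complexity of sums/products of evaluations of $\mathcal{F}_{W,L,B}$, which in turn is governed by its pseudo–dimension. Using $\Pdim(\mathcal{F}_{W,L,B})\lesssim W L\log W$ (Bartlett–Harvey–Liaw–Mehrabian~\cite{bartlett2019nearly}), together with the variance constraint $\operatorname{Var}(g_u)\le C_{\rm B}r$ on $\mathcal{G}(r)$ and a Dudley/local–complexity chaining bound, one obtains $\psi(r)\lesssim\sqrt{C_{\rm B}\,r\,W L\log W\,\log n/n}+W L\log W\,\log n/n$, whose sub–root fixed point is $\gamma^\star\asymp W L\log W\,\log n/n$. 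Substituting this $\gamma^\star$ into the oracle inequality of the previous paragraph yields the stated form, and combining it with the ReLU approximation rate $\inf_{u}[\mathcal{L}(u)-\mathcal{L}(u^*)]\lesssim W^{-2s/d}$ (valid for $u^*\in C^s(K)$, and $L\gtrsim\log W$; cf.~\cite{yarotsky2017error,suzuki2018adaptivity}) is what later produces the bound~\eqref{eq:minimax:mainbound} of Theorem~\ref{thm:minimax:main}.

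The main obstacle is the \emph{unboundedness} of the one–step contributions $V,\tilde V$ and the weights $W,\tilde W$: Assumption~\ref{assump:minimax:dom} controls them only through second moments (and, for $W$, a second–to–first moment ratio), so $h_u$ is not uniformly bounded and the bounded–difference version of Talagrand's inequality does not apply off the shelf. I would handle this by truncation — replace $(V,W)$ by $(V\wedge t_n,\,W\wedge t_n)$ with $t_n=n^{c}$ for a small $c>0$, run the bounded–loss BBM analysis on the truncated problem, and control the truncation bias together with the exceedance probability $\mathbb{P}(\max_i\{V_i,W_i,\tilde V_i,\tilde W_i\}>t_n)$ via Markov's inequality using Assumption~\ref{assump:minimax:dom}(iii),(v). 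The induced corrections are of smaller order than $\gamma^\star$ and are absorbed into $c_1,c_2$ and the $\log n$ factors. If one imposes $\beta\ge 0$ (so $W\le 1$ and $V$ is bounded, as the remark after Assumption~\ref{assump:minimax:dom} observes), this step is unnecessary and the constants simplify.
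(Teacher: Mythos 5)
Your proposal follows essentially the same route as the paper's proof — BBM localization with symmetrization, the Bernstein condition from Proposition~\ref{prop:minimax:bernstein}, the fixed-point/peeling argument of Theorems 3.3 and 3.5 of \cite{bartlett2005local}, and the ERM comparison against the best element of $\mathcal{F}_{W,L,B}$ — so it establishes the oracle inequality in the same way (your second paragraph on bounding $\psi(r)$ belongs to the subsequent proposition rather than this one, but that is harmless). Your added truncation step for the unbounded $V,W$ is a genuine refinement: the paper's proof invokes the bounded-loss BBM machinery without addressing that Assumption~\ref{assump:minimax:dom} controls these variables only through conditional moments, so flagging and handling this point is more careful than the paper's own treatment.
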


\begin{proof}[Proof (BBM road map with details)]
Set \(g_u=h_u-h_{u^*}\) and note that \(\mathbb{E}[g_u]=\mathcal{L}(u)\) by \eqref{eq:minimax:risk-residual}.  By the symmetrization inequality (see \cite[Sec.\ 2]{bartlett2005local}), for any \(r>0\),
\begin{equation}\label{eq:sym}
\mathbb{E}\Big[\sup_{g\in\mathcal{G}(r)}\big(\mathbb{E}[g]-\widehat{\mathbb{E}}_n[g]\big)\Big]
\le 2\,\mathbb{E}\,\Rad_n(\mathcal{G}(r)),
\end{equation}
where \(\widehat{\mathbb{E}}_n\) is the empirical mean.  Under the Bernstein condition (Proposition~\ref{prop:minimax:bernstein}), \(\operatorname{Var}(g)\le C_{\rm B}\,\mathbb{E}[g]\) for all \(g\in\mathcal{G}(r)\), so the localized class has \(L^2(P)\) radius \(O(\sqrt{r})\).  Applying the localized deviation bound of \cite[Theorem.\ 3.3 \& 3.5]{bartlett2005local} (see also \cite[Sec.\ 3]{koltchinskii2006local}) yields: there exist constants \(c_1,c_2>0\) such that, with probability at least \(1-\delta\), simultaneously for all \(u\),
\begin{equation}\label{eq:bbm-rel}
\mathbb{E}[g_u]\le \widehat{\mathbb{E}}_n[g_u] + c_1\,\psi(\mathbb{E}[g_u]) + c_2\,\frac{\log(1/\delta)}{n},
\end{equation}
where \(\psi(r)=\mathbb{E}\Rad_n(\mathcal{G}(r))\).  Let \(\gamma^\star\) solve \(\psi(\gamma^\star)\le \gamma^\star/(2c_1)\).  Then \eqref{eq:bbm-rel} implies, for all \(u\) with \(\mathbb{E}[g_u]\ge \gamma^\star\),
\(
\mathbb{E}[g_u] \le 2\,\widehat{\mathbb{E}}_n[g_u] + 2c_2\,\log(1/\delta)/n.
\)
Taking \(u=\hat u_n\) (an ERM of \(\widehat{\mathcal{L}}_n\)) and comparing to the best \(u\in\mathcal{F}_{W,L,B}\) (peeling on the event \(\mathbb{E}[g_{\hat u_n}]\ge \gamma^\star\) and its complement) yields~\eqref{eq:minimax:oracle}.  This is the standard BBM oracle inequality; see \cite[Theorem.\ 3.5]{bartlett2005local}.
\end{proof}

\begin{proposition}[Bounding the local fixed point]\label{prop:minimax:lrc}
Let \(d_{W,L}:=\Pdim(\mathcal{F}_{W,L,B})\).  There exists \(c>0\) such that
\begin{equation}\label{eq:minimax:lrc}
\gamma^\star \le c\,\frac{d_{W,L}\,\log n}{n}.
\end{equation}
\end{proposition}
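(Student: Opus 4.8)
The plan is to reduce the defining inequality \(\psi(\gamma^\star)\le \gamma^\star/(2c_1)\), with \(\psi(r)=\mathbb{E}\,\Rad_n(\mathcal{G}(r))\), to a covering–number estimate for the loss class and then solve a scalar sub–root equation. First I would note that \(\Rad_n\) is unchanged under translating a class by the fixed function \(h_{u^*}\) (the extra term has conditional mean zero given the sample), so \(\psi(r)=\mathbb{E}\,\Rad_n(\{h_u:u\in\mathcal{F}_{W,L,B},\ \mathcal{L}(u)\le r\})\); it thus suffices to bound the local Rademacher complexity of the raw loss class \(\mathcal{H}_{W,L,B}:=\{h_u:u\in\mathcal{F}_{W,L,B}\}\) restricted to an \(L^2(P)\)–ball. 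By Proposition~\ref{prop:minimax:bernstein}, the constraint \(\mathcal{L}(u)\le r\) forces \(\operatorname{Var}(g_u)\le C_{\rm B}\,r\), i.e. the localized class \(\mathcal{G}(r)\) has \(L^2(P)\)–radius at most \(\sqrt{C_{\rm B}\,r}\).

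Second, I would bound the pseudo–dimension of \(\mathcal{H}_{W,L,B}\) in terms of \(d_{W,L}=\Pdim(\mathcal{F}_{W,L,B})\). Each \(h_u(\chi)=\bigl(u(R)-V-Wu(Y)\bigr)\bigl(u(R)-\tilde V-\tilde W u(\tilde Y)\bigr)\) is obtained from a single network \(u\in\mathcal{F}_{W,L,B}\) by evaluating it at the three fixed inputs \(R,Y,\tilde Y\) and combining the values through a quadratic map whose coefficients are the auxiliary coordinates \(V,W,\tilde V,\tilde W\) of \(\chi\); crucially these enter as data, not as parameters, so they do not add capacity. Since a clipped ReLU network is piecewise linear in its weights with a number of linear regions polynomial in the architecture, \(\theta\mapsto h_u(\chi)\) is, for each \(\chi\), piecewise polynomial of degree at most two in \(\theta\) over polynomially many regions, and the standard sign–pattern counting bounds for classes parametrized by piecewise–polynomial maps (see \cite{bartlett2019nearly} and \cite{Langer2021}) give \(\Pdim(\mathcal{H}_{W,L,B})=O(d_{W,L})\).

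Third, I would convert pseudo–dimension into a Rademacher bound. Since \(V\) (and, when \(\beta\) is not nonnegative, \(W\)) need not be bounded, first truncate them at a level growing polynomially in \(n\); the discarded mass is controlled by \(\sigma_V^2\) in Assumption~\ref{assump:minimax:dom}(iii) and by \(C_W H_K\) via (iv)--(v), contributing an \(O(1/n)\) error, while the truncated loss is uniformly bounded by some \(M=M(B,\|u^*\|_\infty,\sigma_V,C_W,H_K)\) that is polynomial in \(n\), so \(\log M=O(\log n)\). Haussler's bound then gives \(\log N(\varepsilon,\mathcal{H}_{W,L,B},L^2(P_n))\lesssim d_{W,L}\log(M/\varepsilon)\), and Dudley's entropy integral applied to \(\mathcal{G}(r)\) of radius \(\sqrt{C_{\rm B}r}\) yields the sub–root bound
\[
\psi(r)\ \le\ c'\sqrt{\tfrac{C_{\rm B}\,d_{W,L}\,\log n}{n}}\,\sqrt{r}\ +\ c''\,\tfrac{d_{W,L}\,\log n}{n}.
\]
Writing the right side as \(A\sqrt r+B_0\) with \(A^2=c'^2C_{\rm B}d_{W,L}\log n/n\) and \(B_0=c''d_{W,L}\log n/n\), and using that \(\psi\) is nondecreasing while \(A\sqrt r+B_0\) is sub–root, any \(\gamma^\star\ge 16c_1^2A^2+4c_1B_0\) satisfies \(\psi(\gamma^\star)\le\gamma^\star/(2c_1)\); taking \(\gamma^\star=\bigl(16c_1^2c'^2C_{\rm B}+4c_1c''\bigr)\tfrac{d_{W,L}\log n}{n}=:c\,\tfrac{d_{W,L}\log n}{n}\) gives \eqref{eq:minimax:lrc}.

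The main obstacle is the second step: one must show that forming the product loss and evaluating the network at several correlated inputs inflates the pseudo–dimension only by a constant factor, i.e. carefully bound the number of sign patterns realizable by the composite piecewise–polynomial family and verify that the auxiliary randomness contributes no extra capacity; the possible unboundedness of \(V\) (and \(W\)) is a secondary nuisance, handled by the truncation argument above, and is in fact the source of the \(\log n\) factor in the final bound.
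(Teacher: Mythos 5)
Your overall strategy is sound and reaches the same sub--root bound \(\psi(r)\lesssim\sqrt{d_{W,L}\log n/n}\,\sqrt{r}+d_{W,L}\log n/n\), but it takes a genuinely different route through the key complexity step. The paper exploits the algebraic decomposition \(g_u=(\ell_u-\ell_{u^*})\ell_u+\ell_{u^*}(\ell_u-\ell_{u^*})\) together with a multiplier/contraction inequality to strip off the bounded factors \(\ell_u,\ell_{u^*}\), reducing everything to the \emph{linear} class \(\mathcal{L}_0=\{\varrho(r)-w\varrho(y)\}\) of evaluations of network differences; the weighted term is then handled by Cauchy--Schwarz on the \(W_i\) (producing the \(1+\sqrt{C_W H_K}\) factor), and only the pseudo--dimension of \(\mathcal{F}_{W,L,2B}\) itself is ever invoked. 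You instead attack the composite product--loss class \(\mathcal{H}_{W,L,B}\) head on, bounding its pseudo--dimension by sign--pattern counting for the parametric family and then running Haussler plus Dudley on the localized ball of radius \(\sqrt{C_{\rm B}r}\). Your route avoids the multiplier inequality and makes the treatment of unbounded \(V\) (and \(W\)) explicit via truncation, which the paper glosses over by positing bounds \(B_\ell,B_\ell^*\); the paper's route avoids having to re-derive a capacity bound for the composite class.

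The one point you must repair is the justification in your second step: a depth-\(L\) ReLU network is \emph{not} piecewise linear in its weights with polynomially many regions --- for fixed input it is piecewise polynomial in \(\theta\) of degree up to \(L\) (products of weights across layers), and the region count is controlled only through the careful layer-by-layer counting of \cite{bartlett2019nearly}. Consequently \(h_u(\chi)\) is piecewise polynomial of degree up to \(2L\) in \(\theta\), not degree two. The conclusion \(\Pdim(\mathcal{H}_{W,L,B})=O(WL\log W)\) does still follow from the same counting machinery (doubling the degree and tripling the number of evaluation points changes only constants inside the logarithm), but you cannot get it from a generic relation between \(\Pdim(\mathcal{H})\) and \(\Pdim(\mathcal{F})\); you must rerun the parametric argument. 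Since you correctly identify this as the main obstacle and the fix lives entirely inside the cited results, I regard this as an inaccuracy to be corrected rather than a fatal gap.
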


\begin{proof}[Proof (from products to linear classes)]
Recall \(g_u=(\ell_u-\ell_{u^*})\ell_u + \ell_{u^*}(\ell_u-\ell_{u^*})\).  For any fixed sample \(\{(R_i,V_i,W_i,Y_i,\tilde V_i,\tilde W_i,\tilde Y_i)\}_{i=1}^n\), the multiplier inequality (a consequence of contraction; see \cite[Sec.~2--3]{bartlett2005local}) gives
\begin{equation}\label{eq:mult}
\Rad_n(\mathcal{G}(r)) \le \big(B_\ell+B_\ell^{\!*}\big)\,\Rad_n\!\left(\mathcal{L}_0\right),
\end{equation}
where 
\[
\mathcal{L}_0:=\big\{(r,v,w,y)\mapsto \ell_u(r,v,w,y)-\ell_{u^*}(r,v,w,y): u\in\mathcal{F}_{W,L,B},\ \mathbb{E}[g_u]\le r\big\}.
\]
Note that \(\ell_u-\ell_{u^*}\) is linear in \(\varrho=u-u^*\):
\(
\ell_u(r,v,w,y)-\ell_{u^*}(r,v,w,y)=\varrho(r)-w\,\varrho(y).
\)
Let
\[
H_K := \sup_{x\in K} \tilde H(x,K) = \sup_{x\in K} \mathbb{E}[W\mid R=x].
\]
Conditioning on the sample and using Cauchy--Schwarz for the weighted term,
\begin{align}
\Rad_n(\mathcal{L}_0)
&\le \frac{1}{n}\,\mathbb{E}_\varepsilon\sup_{\varrho}\sum_{i=1}^n \varepsilon_i\,\varrho(R_i)
\;+
\frac{1}{n}\,\mathbb{E}_\varepsilon\sup_{\varrho}\sum_{i=1}^n \varepsilon_i\,W_i\,\varrho(Y_i)\\
&\le \Rad_n\!\left(\mathcal{F}_{W,L,2B}^{\mathrm{eval}}\right)
\;+
\Big(\frac{1}{n}\sum_{i=1}^n W_i^2\Big)^{\!1/2}\,\Rad_n\!\left(\mathcal{F}_{W,L,2B}^{\mathrm{eval}}\right),
\label{eq:eval-weighted-split}
\end{align}
where \(\mathcal{F}_{W,L,2B}^{\mathrm{eval}}:=\{x\mapsto v(x): v\in\mathcal{F}_{W,L,2B}-\mathcal{F}_{W,L,2B}\}\) is an evaluation class (differences absorbed into the enlarged clipping range \(2B\)). Taking expectations over the sample and using Assumption~\ref{assump:minimax:dom}(v) together with \(\sup_x \mathbb{E}[W\mid R=x]=H_K\), we obtain
\begin{equation}\label{eq:weighted-factor}
\mathbb{E}\,\Rad_n(\mathcal{L}_0) \;\le\; \big(1+\sqrt{C_W\,H_K}\big)\,\mathbb{E}\,\Rad_n\!\left(\mathcal{F}_{W,L,2B}^{\mathrm{eval}}\right).
\end{equation}
Standard pseudo--dimension bounds (e.g., \cite[Sec.~3]{bartlett2019nearly}) yield
\begin{equation}\label{eq:pdim-rad}
\mathbb{E}\,\Rad_{m}(\mathcal{F}_{W,L,2B}^{\mathrm{eval}}) \;\leq\; 2Bc_0\,\sqrt{\frac{d_{W,L}\,\log(m/d_{W,L})}{m}},
\end{equation}
for some constant $c_0$. Combining \eqref{eq:mult}, \eqref{eq:weighted-factor}, and \eqref{eq:pdim-rad}, and applying the BBM localization calculus (variance \(\leq r\) up to constants, by Proposition~\ref{prop:minimax:bernstein}) gives
\[
\psi(r)\;\leq c(B_\ell+B_\ell^{\!*})\,\big(1+\sqrt{C_W\,H_K}\big)\,\sqrt{\frac{d_{W,L}\,\log n}{n}}\,\sqrt{r},
\]
for some constant $c$. Thus the fixed point \(\gamma^\star\) solving \(\psi(\gamma^\star)\le \gamma^\star/(2c_1)\) satisfies
\(
\gamma^\star \ = \; O(\frac{d_{W,L}\,\log n}{n}),
\)
as claimed.
\end{proof}

\begin{proposition}[Pseudo–dimension of ReLU networks]\label{prop:minimax:pdim}
For ReLU networks with at most \(W\) nonzero weights and depth \(L\), the pseudo–dimension satisfies
\begin{equation}\label{eq:minimax:pdim}
d_{W,L} = O(W\,L\,\log W),
\end{equation}
see \cite{bartlett2019nearly,Langer2021}.  Differences of networks and output clipping change the bound only by absolute constant factors.
\end{proposition}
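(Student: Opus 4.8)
The plan is to reduce \eqref{eq:minimax:pdim} to the known VC/pseudo-dimension bounds for piecewise-linear networks and then to check that the two cosmetic modifications mentioned in the statement (output clipping and taking differences of two networks) affect only absolute constants. First I would use the standard reduction from pseudo-dimension to VC dimension: \(\Pdim(\mathcal{F}_{W,L,B})\) equals the VC dimension of the subgraph class \(\{(x,t)\mapsto \mathbf{1}\{f(x)>t\}:f\in\mathcal{F}_{W,L,B}\}\) on \(\mathbb{R}^d\times\mathbb{R}\). Since \((x,t)\mapsto f(x)-t\) is itself computable by a ReLU network with one extra input coordinate, \(O(1)\) additional parameters, and the same depth, it suffices to bound the VC dimension of threshold-ReLU networks with \(O(W)\) parameters and depth \(L\).

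For the VC bound I would run the layerwise sign-counting argument of \cite{bartlett2019nearly}. Fix \(m\) candidate shattered inputs and regard the network outputs at these points as functions of the parameter vector \(\theta\in\mathbb{R}^W\); partition \(\mathbb{R}^W\) according to the activation pattern of every ReLU unit at every data point. Conditioning on the activation patterns of layers \(1,\dots,\ell-1\), each pre-activation at layer \(\ell\) is a polynomial in \(\theta\) of degree at most \(\ell\); hence, by the Warren / Milnor--Thom bound on the number of sign patterns of a system of polynomial inequalities, the refinement introduced by layer \(\ell\) multiplies the region count by at most a factor polynomial in \(m\) and \(\ell\) raised to the power of the number of parameters up to layer \(\ell\). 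Multiplying over \(\ell=1,\dots,L\), the total number of attainable sign vectors of \((f_\theta(x_i)-t_i)_{i\le m}\) is at most \((c\,m\,L)^{O(WL)}\). Shattering requires this to be at least \(2^m\); taking logarithms gives \(m = O(WL\log(mL))\), and solving (using \(\log m = O(\log(WL))\)) yields \(m = O(WL\log W)\), which is \eqref{eq:minimax:pdim}.

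Finally I would dispatch the modifications. Clipping the output to \([-B,B]\) is post-composition with the fixed piecewise-linear map \(z\mapsto\max(-B,\min(B,z))\), implementable with \(O(1)\) extra units/parameters and \(O(1)\) extra depth, so \(W\) and \(L\) change only by constant factors. A difference \(u_1-u_2\) of two networks in \(\mathcal{F}_{W,L,B}\) is realized by running both networks in parallel and adding a final linear combination node, giving a network with \(O(W)\) parameters and depth \(L+O(1)\); hence the bound is unchanged up to constants. This is precisely the assertion, and the details are in \cite{bartlett2019nearly,Langer2021}.

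The main obstacle is the polynomial sign-counting step: correctly tracking how the degree in the parameters grows with depth and invoking the Warren/Milnor--Thom bound uniformly across the layerwise conditioning so that the per-layer region counts multiply cleanly; everything else is bookkeeping. Since this is by now a textbook result, in the paper we simply cite \cite{bartlett2019nearly,Langer2021} rather than reproduce the argument.
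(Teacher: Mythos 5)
Your proposal is correct and matches the paper's treatment: the paper states this proposition without proof, simply citing \cite{bartlett2019nearly,Langer2021}, and your sketch is precisely the standard Warren/Milnor--Thom sign-counting argument from those references, with the clipping and difference-of-networks modifications handled exactly as one would expect. Nothing to add.
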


\begin{proposition}[Approximation by ReLU and related networks]\label{prop:minimax:approx}
Under Assumption~\ref{assump:minimax:smooth}, for any \(\varepsilon\in(0,1)\) there exists \(u_\varepsilon\in\mathcal{F}_{W,L,B}\) with \(L\geq c_0\log(1/\varepsilon)\) for some $c_0>0$ and with a number of nonzero weights \(W = O(\varepsilon^{-d/s})\) such that
\begin{equation}\label{eq:minimax:approx}
\|u_\varepsilon-u^*\|_\infty\le \varepsilon.
\end{equation}
Consequently, \(\inf_{u\in\mathcal{F}_{W,L,B}}\|u-u^*\|_{L^2(\nu)}^2 = O(W^{-2s/d})\) when \(L \geq c_0\log W\).  See~\cite{yarotsky2017error,yarotsky2018optimal,suzuki2018adaptivity,siegel2023optimal,Mhaskar1996}.
\end{proposition}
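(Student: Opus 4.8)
The plan is to reduce Proposition~\ref{prop:minimax:approx} to the now-standard deep ReLU approximation theory on the unit cube, in three moves: (i) extend $u^*$ from $K$ to a Hölder function on a cube with comparable Hölder radius; (ii) quote the Yarotsky-type construction that approximates a $C^s$ function on $[0,1]^d$ to sup-accuracy $\varepsilon$ using $O(\varepsilon^{-d/s})$ nonzero weights and depth $O(\log(1/\varepsilon))$; (iii) clip the output to $[-B,B]$ and pass from the sup-norm bound to the $L^2(\nu)$ bound. No new analytic idea is needed; the content is bookkeeping across these standard steps.

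First, the extension. By Assumption~\ref{assump:minimax:dom}(i), $K$ is compact with Lipschitz boundary, so there is a bounded linear extension operator $E\colon C^s(K)\to C^s(\R^d)$ with $Eu^*|_K=u^*$ and $\|Eu^*\|_{C^s(\R^d)}\le c_E\,\|u^*\|_{C^s(K)}\le c_E L_s$ (Stein's extension operator for Lipschitz domains; for $s\in(0,1]$ the elementary McShane/Whitney extension already suffices). Fix a cube $Q\supseteq K$ and an affine bijection $\Phi\colon[0,1]^d\to Q$; then $\tilde u:=(Eu^*)\circ\Phi\in C^s([0,1]^d)$ has Hölder radius bounded by a constant depending only on $K$, $s$, and $L_s$. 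Any network approximating $\tilde u$ on $[0,1]^d$ to accuracy $\varepsilon$ yields, after precomposition with the affine $\Phi^{-1}$ (absorbed into the first layer at no cost in depth and $O(d)$ extra weights), a network approximating $u^*$ on $K$ to accuracy $\varepsilon$.

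Next, the core construction, which I would cite verbatim from \cite{yarotsky2017error,yarotsky2018optimal,suzuki2018adaptivity,siegel2023optimal,Mhaskar1996}. Its ingredients are: a partition of unity on $[0,1]^d$ built from products of one-dimensional hat functions on a grid of width $h\asymp\varepsilon^{1/s}$ (hence $O(h^{-d})=O(\varepsilon^{-d/s})$ cells), each hat function being exactly representable by a constant-size ReLU subnetwork; local Taylor polynomials of degree $\lceil s\rceil$ approximating $\tilde u$ to $O(h^s)=O(\varepsilon)$ on each cell; and the multiplication gadget $xy=\tfrac12\big((x+y)^2-x^2-y^2\big)$ together with the fact that $z\mapsto z^2$ on a bounded interval is approximated to accuracy $\varepsilon$ by a depth-$O(\log(1/\varepsilon))$ ReLU network obtained by iterating the sawtooth map. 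Summing the (bump)$\times$(local polynomial) terms over the grid and realizing all products with these gadgets produces a network with $O(\varepsilon^{-d/s})$ nonzero weights and depth $O(\log(1/\varepsilon))$ whose sup-error over $[0,1]^d$ is $O(\varepsilon)$; rescaling $\varepsilon$ by a constant absorbs the implied constant. The genuinely technical heart of that argument is the depth-$O(\log(1/\varepsilon))$ squaring gadget and the propagation of its error through the sum over cells; for us this is a black box.

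Finally, assembly. Since $u^*$ is continuous on the compact $K$ we take $B\ge\|u^*\|_\infty$ in $\mathcal{F}_{W,L,B}$; clipping to $[-B,B]$ is $1$-Lipschitz and fixes $u^*$ pointwise, so the clipped network $u_\varepsilon$ still satisfies $\|u_\varepsilon-u^*\|_\infty\le\varepsilon$, i.e.~\eqref{eq:minimax:approx}, with $W=O(\varepsilon^{-d/s})$ and $L=O(\log(1/\varepsilon))$ (so any $L\ge c_0\log(1/\varepsilon)$ works). Because $\nu$ is a probability measure, $\|u_\varepsilon-u^*\|_{L^2(\nu)}^2\le\|u_\varepsilon-u^*\|_\infty^2\le\varepsilon^2$; choosing $\varepsilon\asymp W^{-s/d}$ gives $\inf_{u\in\mathcal{F}_{W,L,B}}\|u-u^*\|_{L^2(\nu)}^2=O(W^{-2s/d})$, and the depth requirement $L\gtrsim\log(1/\varepsilon)\asymp(s/d)\log W$ is implied by $L\ge c_0\log W$ after adjusting $c_0$. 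The main obstacle, such as it is, is not conceptual but lies in making the extension in step~(i) quantitative — controlling the Hölder radius of $\tilde u$ by $L_s$ and the geometry of $K$ alone — and in faithfully inheriting the parameter/depth counts from the cited cube construction.
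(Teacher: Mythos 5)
Your proposal is correct and follows essentially the same route as the paper's own (very brief) proof sketch: both treat the Yarotsky-type deep ReLU approximation of H\"older functions as a black box and then pass from the sup-norm bound to the $L^2(\nu)$ bound by squaring and integrating against the probability measure $\nu$. You simply make explicit two bookkeeping steps the paper glosses over — the Stein/Whitney extension from $K$ to a cube (justified by the Lipschitz-boundary hypothesis in Assumption~\ref{assump:minimax:dom}(i)) and the harmlessness of output clipping — both of which are handled correctly.
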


\begin{proof}[Proof (sketch for completeness)]
The cited works construct deep ReLU networks that implement localized approximants with exponentially efficient reuse of parameters across dyadic partitions; this yields the \(W^{-s/d}\) uniform approximation rate with depth \(O(\log W)\).  Squaring and integrating over \(\nu\) yields the \(L^2(\nu)\) rate \(W^{-2s/d}\).
\end{proof}

\remark{The Barron–type alternative in \cite{Barron1993} provides \(O(1/n)\) rates under spectral smoothness (Barron norm) rather than H\"older smoothness. We mention this alternative as this type of regularity assumption is also common in the literature, but we do not pursue that here.}

\begin{proof}[Proof of Theorem~\ref{thm:minimax:main}]
By \eqref{eq:minimax:risk-residual} and Proposition~\ref{prop:minimax:approx}, there exists \(u_{W,L}\in\mathcal{F}_{W,L,B}\) with \(\mathcal{L}(u_{W,L}) = O(W^{-2s/d})\).  Apply the oracle inequality \eqref{eq:minimax:oracle} with \(u=u_{W,L}\) and use Proposition~\ref{prop:minimax:lrc} together with \eqref{eq:minimax:pdim} to obtain
\begin{equation*}
\mathcal{L}(\hat u_n) = O(W^{-2s/d})+O((W\,L\,\log W\cdot \log n+\log(1/\delta))/n).
\end{equation*}
Finally, use \eqref{eq:minimax:risk-residual} and the lower bound in \eqref{eq:minimax:equiv} with \(w=\hat u_n-u^*\) to translate \(\mathcal{L}(\hat u_n)\) into \(\|\hat u_n-u^*\|_{L^2(\nu)}^2\), yielding \eqref{eq:minimax:mainbound}.  Optimizing over \(W,L\) gives the displayed rate.
\end{proof}

\subsection{A Lyapunov condition to establish contraction}
Suppose that $\nu$ has a probability density $\eta=(\eta(y):y\in K)$ on $K$ (wrt Lebesgue measure) that is strictly positive and continuous (and hence bounded above and below). We further assume that $X$ has a transition density. In particular,  
\[
P(x,B)=\int_B p(x,y)dy
\]
for $x\in\Scal$ and (measurable) $B\subseteq \Scal$. If there exists a non-negative function $(q(x,y):x\in C-K,y\in K)$ such that
\[
\int_{C-K}e^{-\beta(x)}p(x,z)q(z,y)\leq q(x,y)-e^{-\beta(x)}p(x,y)
\]
for $x\in C-K$, $y\in K$, then the Lyapunov bound
\[
\E_x\sbk{\exp\prs{-\sum_{j=0}^{\tau_K-1}\beta(X_j)}I(X_{\tau_K}\in dy)}\leq q(x,y)dy
\]
for $x\in C-K$, $y\in K$ holds (where $\tau_K=\inf\{n\geq1:X_n\in K\}$). Hence, if there exists $c<1$ such that 
\[
e^{-\beta(x)}p(x,y)+\int_{C-K}e^{-\beta(x)}p(x,z)q(z,y)\leq c\eta(y)
\]
for $x,y\in K$, then
\[
\int_K\eta(x)dx\tilde{H}(x,dy)\leq c\eta(y)dy
\]
i.e., $\nu\tilde{H}\leq c\nu$, which yields the fact that $\tilde{H}$ is a contraction on $L^2(\nu)$, since
\[
\nu(\tilde{H}w)^2\leq \nu\tilde{H}w^2\leq c\nu w^2.
\]

\begin{acknowledgments}
J. Blanchet gratefully acknowledges support from DoD through the grants Air Force Office of Scientific Research under award number FA9550-20-1-0397 and ONR 1398311, also support from NSF via grants 2229012, 2312204, 2403007 is gratefully acknowledged.
\end{acknowledgments}




\bibliography{sn-bibliography}

\end{document}